\theoremstyle{plain}
\newtheorem{theorem}{Theorem}[section]
\newtheorem{lemma}{Lemma}[section]
\newtheorem{remark}{Remark}[section]
\newcommand{\WI}{\mathcal{W}_1}
\newcommand{\WII}{\mathcal{W}_2}
\newcommand{\R}{\mathbb{R}}
\newcommand{\Ex}{\mathbb{E}}
\title{\LARGE Combining Wasserstein-1 and Wasserstein-2 proximals: robust manifold learning via well-posed generative flows }
\author{%
Hyemin Gu \quad Markos A. Katouslakis \quad Luc Rey-Bellet \quad Benjamin J. Zhang\\
Department of Mathematics and Statistics \\
University of Massachusetts Amherst \\
\texttt{\{hgu,markos,luc,bjzhang\}@umass.edu}
}
\begin{document}

\maketitle

\begin{abstract}
We formulate well-posed continuous-time generative flows for learning distributions that are supported on low-dimensional manifolds through Wasserstein proximal regularizations of $f$-divergences. Wasserstein-1 proximal operators regularize $f$-divergences so that singular distributions can be compared. Meanwhile, Wasserstein-2 proximal operators regularize the paths of the generative flows by adding an optimal transport cost, i.e., a kinetic energy penalization. Via mean-field game theory, we show that the \emph{combination} of the two proximals is critical for formulating well-posed generative flows. Generative flows can be analyzed through optimality conditions of a mean-field game (MFG), a system of a backward Hamilton-Jacobi (HJ) and a forward continuity partial differential equations (PDEs) whose solution characterizes the optimal generative flow. For learning distributions that are supported on low-dimensional manifolds, the MFG theory shows that the Wasserstein-1 proximal, which addresses the HJ terminal condition, and the Wasserstein-2 proximal, which addresses the HJ dynamics, are both necessary for the corresponding backward-forward PDE system to be well-defined and have a unique solution with provably \emph{linear} flow trajectories. This implies that the corresponding generative flow is also unique and can therefore be learned in a robust manner even for learning high-dimensional distributions supported on low-dimensional manifolds. The generative flows are learned through \emph{adversarial training} of continuous-time flows, which bypasses the need for reverse simulation. We demonstrate the efficacy of our approach for generating high-dimensional images without the need to resort to autoencoders or specialized architectures.

\end{abstract}

\section{Introduction}

Continuous normalizing flows (CNF) are a class of generative models \cite{chen2018neural,grathwohl2018ffjord} based on learning deterministic ODE dynamics that transport a simple reference distribution to a target distribution. However, two challenges arise when applying CNFs for learning high-dimensional distributions: (a) the flows are heavily discretization or implementation dependent due to lack of a unique optimizer, and (b) they are not suitable for learning distributions supported on manifolds, as is typically the case in many high-dimensional datasets, as they need to be invertible in practice.  For example, \cite{finlay2020train,onken2021ot} have noted that without additional regularizers, normalizing flows can produce multiple valid flows, many of which are difficult to integrate. \cite{ott2020resnet} has noted that the test accuracy of neural ODEs are discretization dependent. Moreover, as normalizing flows need to be invertible, latent representations or specialized architectures are frequently needed to successfully generate from distributions that are supported on manifolds for even moderately high-dimensional datasets \cite{chen2018neural,grathwohl2018ffjord,finlay2020train,onken2021ot,yang2019potential}. The challenges of CNFs are fundamentally rooted in the mathematical fact that their associated optimization problems are \emph{ill-defined}, especially for distributions supported on low-dimensional manifolds.

In this paper, through rigorous theory and numerical experiments, we demonstrate that the {\textbf{composition of the Wasserstein-1 and Wasserstein-2 proximal regularizations}} is essential for formulating \emph{well-posed} optimization problems in the training of deterministic flow-based generative models.
Recent work \cite{finlay2020train,yang2019potential,onken2021ot} has shown that training generative flows via a Kullback-Leibler (KL) divergence regularized with a \textbf{Wasserstein-2 proximal} ($\mathcal{W}_2$) through the Benamou-Brenier formulation of optimal transport yields sample paths that are more regular. In fact, we prove via the theory of Hamilton-Jacobi equations and mean-field games that regularizing the KL divergence with a $\WII$ proximal produces optimal trajectories with \emph{constant velocity}, i.e., straight lines. The proof is based on connections of Hamilton-Jacobi equations to the compressible Euler equations in fluid mechanics, which lead to further interesting interpretations of generative flows. Such flows can be discretized with fewer time steps, saving on computational overhead and may learn the target distribution faster.





Learning distributions supported on manifolds, however, is still a challenge with the Wasserstein-2 proximal alone as the KL (or, more generally, $f$-) divergence is unable to compare distributions that are not absolutely continuous with respect to each other as they are   finite only for distributions that share the same support. We propose regularizing the KL (or $f$-) divergence with a \textbf{Wasserstein-1} ($\mathcal{W}_1$) proximal as well \cite{birrell2020f,gu2023GPA}. Applying a $\mathcal{W}_1$ proximal regularization of the KL allows us to compare \emph{mutually singular distributions}. The variational derivative of a $\WI$ regularized KL divergence is well-defined for \emph{any} perturbation in the space of probability distributions, including mutually singular ones, meaning that the resulting training procedure (i.e., gradient descent) is \emph{robust} for learning distributions supported on manifolds.

This composition of $\WI$ and $\WII$ proximals for improving normalizing flows is grounded in the theory of mean-field games. Generative flows can be roughly viewed as optimal control problems where a velocity field, i.e., a control, is learned so that a collection of particles will evolve from a reference distribution to a target distribution by optimizing an objective functional involving interactions of generated and real samples through various mean-field mechanisms that depend on the class of models, e.g. CNFs, score-based models or gradient flows \cite{zhang2023mean}. The \emph{optimality conditions} of these MFGs are a pair of coupled backward-forward partial differential equations (PDEs): a backward Hamilton-Jacobi equation whose solution describes the optimal velocity field, and a forward continuity equation describing the generation process. The backward-forward structure mirrors that of forward and inverse flows in generative modeling \cite{zhang2023mean}. The terminal condition is associated with the target distribution and choice of probability divergence.
We propose a training objective with the composition of $\WI$ and $\WII$ proximals whose well-posedness is guaranteed by the well-posedness of the MFG optimality conditions. The $\WII$ proximal yields a well-defined Hamilton-Jacobi equation, while the $\WI$ proximal of $f$-divergences provides a well-defined terminal condition for distributions supported on manifolds. We show that the MFG is well-posed via a uniqueness theorem where we show that if the solution of the MFG PDE system is smooth, then the solution is unique. This uniqueness and well-posedness property of the generative flow produces more \emph{robust} learning algorithms that are less dependent on implementation details.

\begin{wrapfigure}[16]{r}{0.4\textwidth} 
  \centering
    \includegraphics[width=0.39\textwidth]{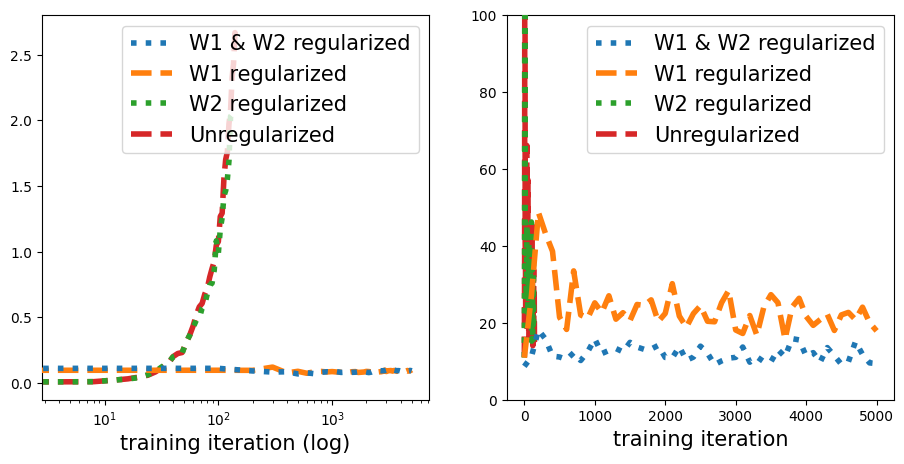}
    
  \caption{Optimality indicator \eqref{eq:indicator2} blows up without $\WI$ proximal (left). The composition of $\mathcal{W}_1$ and $\mathcal{W}_2$ proximals minimizes kinetic energy and keeps the value less oscillatory while training (right). See \cref{sec:numerical:examples} for details.}
  \label{fig:wrapfig}
\end{wrapfigure}

Our algorithm is based on adversarial training, where the proximal structure provides clear interpretations for the discriminators and generators. In contrast to previous CNF approaches \cite{finlay2020train,onken2021ot,grathwohl2018ffjord}, the adversarial formulation \emph{avoids} needing to invert the generative flow during training. Our resulting $\boldsymbol{\WI\oplus\WII}$ \textbf{proximal generative flows} robustly learns distributions supported on manifolds \emph{without specialized architectures or autoencoders}. We preview our numerical results in \Cref{fig:wrapfig}, which show the impact of $\WI$ and $\WII$ proximals for training generative flows of distributions supported on manifolds. Notice that without a $\WI$ proximal, the terminal KL cost diverges over the course of the training, while without a $\WII$ proximal, the kinetic energy oscillates during the course of the training, implying a lack of convergence to a unique flow.

\subsection{Contributions}

\begin{itemize}

\item We rigorously demonstrate that using the composition of Wasserstein-1 and Wasserstein-2 proximals of $f$-divergences to train generative flows is indispensable for \emph{robustly} learning distributions supported on low-dimensional manifolds. The $\mathcal{W}_2$ proximal guarantees trajectories are linear and the $\mathcal{W}_1$ proximal robustly learns distributions on manifolds.  These properties are due to the well-posedness of the optimization problem, which is justified via the theory of mean-field games and the properties of $\mathcal{W}_1$ and $\mathcal{W}_2$ proximals.

\item We produce an \emph{adversarial training algorithm} for training normalizing flows based on the dual formulation of the Wasserstein-1 proximal of $f$-divergences. This eliminates the need for forward-backward simulations of trajectories that are needed to train other CNFs. The call our resulting generative model $\boldsymbol{\WI\oplus \WII}$ \textbf{proximal generative flows}. 

\item We find that $\WI\oplus \WII$ generative flows \emph{robustly} learn high-dimensional data supported on low-dimensional manifolds \emph{without} pretraining autoencoders or tuning specialized architectures.

\end{itemize}

\subsection{Related work}

The use of Wasserstein proximals in generative modeling has already frequently appeared in previous studies. Using $\mathcal{W}_2$ regularized KL divergences for training CNFs were first introduced in \cite{finlay2020train} and \cite{onken2021ot}. The quality of their image generation models, however, were still implementation dependent. In \cite{yang2019potential}, the flows are regularized with the Hamilton-Jacobi equation corresponding to a $\mathcal{W}_2$ proximal, although their objective function does not include the kinetic energy penalization. In all these previous works, the HJ terminal conditions are ill-defined when the target distributions are supported on low-dimensional manifolds, and therefore depend on using some sort of autoencoder. $\mathcal{W}_2$ proximals of linear and nonlinear functions were generally studied in \cite{li2023kernel}. Score-based generative models were shown to be Wasserstein-2 proximals of cross-entropy in \cite{zhang2024wasserstein} using ideas from \cite{li2023kernel} and \cite{zhang2023mean}. Using adversarial training for normalizing flows was considered in \cite{grover2018flowGAN}, but without a kinetic energy penalization (i.e., a Wasserstein-2 proximal) a Jacobian normalization is employed to regularize training. The key connection between mean-field games and generative flows \cite{zhang2023mean} is a crucial tool that is used to prove our proposed $\WI\oplus\WII$ are trained via well-posed optimization problems.

The construction of Wasserstein gradient flow based the $\WI$ proximal of $f$- divergences for generative modeling were considered in \cite{gu2023GPA}. There, a stable particle transport algorithm was constructed that is able to flow between two mutually singular empirical distributions. Additional results about Wasserstein-1 regularized $f$- and R\'enyi divergences can be found in \cite{birrell2020f,birrell2023functionspace}. Their use for designing generative adversarial nets was studied in \cite{birrell2022structure}.

\section{Wasserstein proximals}

\label{sec:wasserstein:proximals}
Let $\mathcal{P}_p(\R^d)$ be the Wasserstein-$p$ space, the space of probability distributions on $\R^d$ with finite $p$-th moments with the Wasserstein-$p$ distance. Wasserstein-1 and Wasserstein-2 proximals have been been used intentionally or inadvertently for studying and designing generative models \cite{birrell2020f,birrell2023functionspace,finlay2020train,gu2023GPA,li2023kernel,onken2021ot,yang2019potential,zhang2024wasserstein}. Wasserstein-$p$ proximals for $p=1,2$ regularize cost functions (e.g., divergences) $\mathcal{F}: \mathcal{P}_p(\mathbb{R}^d) \rightarrow \mathbb{R}$ by an infimal convolution between the function $\mathcal{F}(R)$ and the $p$-th power of Wasserstein-$p$ distance $\mathcal{W}_p^p (P ,R)$ from the input measure $P$  
\begin{equation}
\label{eq:general:proximal:regularization}
   \inf_{R \in \mathcal{P}_p(\mathbb{R}^d)} \left\{\mathcal{F}(R) + \theta \cdot \mathcal{W}_p^p (P ,R)  \right\}
\end{equation}
where $\theta >0$ is a weighting parameter. We define the Wasserstein proximal operator $\textbf{prox}_{\theta,\mathcal{F}}^{W_p}:\mathcal{P}_p(\R^d) \to \mathcal{P}_p(\R^d)$ to be the mapping from input measure $P$ to the optimizer of the \eqref{eq:general:proximal:regularization}, $R^*$. That is, 
\begin{equation}
    \label{eq:proximal:operator}
    R^* = \textbf{prox}_{\theta ,\mathcal{F}}^{\mathcal{W}_p}(P) := \arg\inf_{R \in \mathcal{P}_p(\mathbb{R}^d)} \left\{\mathcal{F}(R) + \theta \cdot \mathcal{W}_p^p (P ,R)  \right\}.
\end{equation}
The Wasserstein proximal operators for $\mathcal{P}_p(\R^d)$, are the analogues of proximal operators that arise in convex optimization on $\R^d$
\cite{ParikhBoyd} \footnote{Let $f:\R^d \to \R$ be a semicontinuous convex function. The proximal operator of $f$ is a generalization of a single gradient flow step for $f$ using a backward Euler scheme with stepsize $1/\lambda$. For $\R^d$, $\textbf{prox}_{\lambda ,f}(v) = \arg\min_{x \in \R^d} \left[ f(x) + \frac{\lambda}{2} \|x-v \|_2^2 \right]$. Proximal operators are useful for optimizing discontinuous or non-differentiable functions $f$.   }. In finite dimensional optimization, proximal operators are useful for developing optimization methods for nonsmooth or discontinuous functions. Analogously, the \emph{Wasserstein} proximal operators are used to optimize nonsmooth functionals over Wasserstein-$p$ space.

We first review how the Wasserstein-2 and Wasserstein-1 proximals are defined and implemented in existing generative models, and how the composition of the proximals is defined.

\subsection{Wasserstein-2 proximal stabilizes training of generative flows }
\label{subsec:background:w2}
In canonical continuous normalizing flows, we find a velocity field $v$ such that the differential equation $\frac{dx}{dt} = v(x(t),t)$ flows a reference measure $\rho(\cdot,0) = \rho_0$ to a target measure $\pi(\cdot)$ given only samples of the latter. The velocity field is learned by optimizing a divergence, such as the KL divergence so that the terminal velocity field $\rho_T = \rho(\cdot,T)$ well-approximates $\pi$. In this case, the cost function is defined $\mathcal{F}(\rho_T) = \mathcal{D}_{KL}(\rho_T\|\pi) = \Ex_{\rho_T} \left[\log \frac{d\rho_T}{d\pi} \right]$. The Wasserstein-2 proximal of $\mathcal{F}$ is then 
\begin{align}
    \inf_{\rho_T} \left\{ \mathcal{F}(\rho_T) + \frac{\lambda}{2T} \mathcal{W}_2^2(\rho_0,\rho_T)\right\}. \label{eq:w2:proximal}
\end{align}
In relation to \eqref{sec:wasserstein:proximals}, here $\theta = \lambda/(2T)$. By the Benamou-Brenier dynamic formulation of optimal transport, the continuous time optimization problem of \eqref{eq:w2:proximal} is 
\begin{align} \label{eq:w2:brenier:bernamou}
    \inf_{\rho_T,v}\left\{ \mathcal{F}(\rho_T) + \lambda\int_0^T \int_{\R^d} \frac{1}{2}|v(x,t)|^2 \rho(x,t) dx dt: \partial_t \rho + \nabla \cdot(v\rho) = 0,\, \rho(x,0) = \rho_0(x)\right\}. 
\end{align}
This is precisely the optimization problem considered in \cite{onken2021ot} and \cite{finlay2020train} for regularizing the training of continuous normalizing flows. The optimal transport cost can be interpreted as a kinetic energy which favors straighter paths and makes the optimization more well-posed as it eliminates many valid flows that may be difficult to integrate. In fact, in Theorem \ref{thm:main} we show that the optimal trajectories are \emph{exactly linear}. 

Previous computational studies \cite{finlay2020train,onken2021ot} showed that optimal transport regularizations stabilized training of CNFs and significantly reduced the training time. Learning distributions supported on low-dimensional manifolds, however, is not straightforward, and previous work implemented specialized architectures or preprocessed the data with autoencoders.

\subsection{Wasserstein-1 proximal enables manifold learning}
\label{subsec:background:w1}
The $f$-divergence between two distributions $\rho$ and $\pi$ is defined $D_f(\rho\|\pi) = \mathbb{E}_{\pi}\left[f\left( \frac{d\rho}{d\pi}\right) \right]$, where $\frac{d\rho}{d \pi}$ is the Radon-Nikodym derivative. A key weakness of $f$-divergences is that they are only finite when the likelihood ratio $d\rho/d\pi$ is finite, meaning that they are unable to effectively learn discrepancies when $\rho$ and $\pi$ are not supported on the same manifold. To address this deficiency, \cite{birrell2020f} studied a broad class of regularized $f$-divergences called the $(f,\Gamma)$-divergence, denoted $D_f^\Gamma$, defined via the infimal convolution of the $f$-divergence with an integral probability metric (IPM) over function space $\Gamma$. Define $\mathcal{W}_\Gamma$ as a general integral probability metric (IPM) over function space $\Gamma$, $
    \mathcal{W}_\Gamma(\rho,\nu) = \sup_{\phi \in \Gamma} \left\{\Ex_\rho [\phi] - \Ex_\nu [\phi] \right\}.$
The main benefit of IPMs is that they are able to compare distributions with different supports. The $D_f^{\Gamma}$ divergence with weighting parameter $L>0$ is 
\begin{align}
    D_f^{\Gamma}(\rho\|\pi) = \inf_{\nu \in \mathcal{P}_1(\mathbb{R}^d)} \left\{ D_f(\nu\|\pi) + L\cdot\mathcal{W}_\Gamma(\rho,\nu )\right\}.
\end{align}
 Via the dual formulation, the Wasserstein-1 distance is an IPM with $\Gamma$ being the space of $1$-Lipschitz functions. Wasserstein-1 proximals of $D_f$ are a special $(f,\Gamma)$-divergence called \emph{Lipschitz-regularized} $f$-divergences, whose primal and dual formulations are
\begin{align}
    D_f^{\Gamma_L}(\rho \| \pi)  &= \inf_{\nu \in \mathcal{P}_1(\mathbb{R}^d)} \left\{D_f(\nu\|\pi) + L\cdot \WI(\rho,\nu) \right\} \label{eq:kl:lip:divergence:primal} \\
    & = \sup_{\phi \in \Gamma_L}\left\{\Ex_\rho[\phi] - \Ex_\pi[f^\star(\phi)] \right\},\label{eq:kl:lip:divergence:dual}
\end{align}
where $\Gamma_L$ is the space of $L$-Lipschitz functions and $f^\star$ is the convex conjugate of $f$. Lipschitz-regularized $f$-divergences are also related to the so-called Moreau-Yosida $f$-divergences \cite{terjek21amoreauyoshida}.

\paragraph{Manifold detecting properties.}

By definition of its primal formulation \eqref{eq:kl:lip:divergence:primal}, the new divergence  satisfies the following fundamental inequality
 \begin{equation}\label{eq:fgdivergence:bounds}
     0\leq D_{f}^{\Gamma_L}(\rho\|\pi) \leq\min\left\{ D_f(\rho\|\pi),L \cdot W_1(\rho,\pi)\right\}\, .
 \end{equation} 
 The inequality implies that, unlike $f$-divergences, the new divergence $D_{f}^{\Gamma_L}(\rho\|\pi)$ is able to compare measures $\rho, \pi$ that are mutually singular. This is especially important when we approximate a target distribution $\pi$ by the generative distribution $\rho$. When $\pi$ is supported on an unknown \textit{manifold}, which is typical in high-dimensional datasets such as images, the generative distribution $\rho$ can easily be singular with respect to $\pi$.

 \paragraph{Lipschitz-regularized $f$-divergences are smooth.}
Another key property of $D_{f}^{\Gamma_L}(\rho\|\pi)$ is \emph{the existence of a variational derivative} with respect to $\rho$, for any $\rho, \pi\in 
\mathcal{P}_1$, {\it without any absolute continuity or density assumptions} on either $\rho$ or $\pi$
\cite{gu2023GPA}. In addition, the optimizer $\phi^*$ of \eqref{eq:kl:lip:divergence:dual} is the variational derivative of the Lipschitz regularized $f$-divergence 
\begin{equation}
\label{eq:first:variation}
    \frac{\delta D_f^{\Gamma_L}(\rho\|\pi)}{\delta \rho} = \phi^*=
    \arg\max_{\phi \in \Gamma_L}  \left \{ \mathbb{E}_\rho[\phi] - \mathbb{E}_\pi [f^\star(\phi)] \right \}\, ,
\end{equation}
A priori the maximizer on the r.h.s of \eqref{eq:first:variation} is defined 
uniquely, up to an additive constant, in the support of $\rho$ and $\pi$ but the varational derivative should be understood as the $L$-Lipschitz regularization of the optimizer which is then defined and $L$-Lipschitz continuous everywhere in $\mathbb{R}^d$, see Lemma 2.3 and Remark 2.4  in \cite{gu2023GPA}.  
In the context of adversarial training, $\phi^*$ can be interpreted as the \emph{discriminator} \cite{gu2023GPA,birrell2022structure}. This fact was first proved in \cite{gu2023GPA} (Theorem 2.1), and was used to construct data-driven Wasserstein gradient flows of Lipschitz-regularized $f$-divergences. 

The existence of a unique well-defined variational derivative for \textit{any} perturbations of all measures in $\mathcal{P}_1$ implies \emph{robust} distribution learning even for distributions supported on a low-dimensional manifold. Gradient-based optimization methods implicitly approximate the variational derivative of the cost functional. IPMs, such as $\WI$, do not have variational derivatives, while $f$- divergences cannot handle distributions with differing supports. \emph{The Wasserstein-1 proximal regularization of $f$-divergences inherits the beneficial qualities of the $\WI$ distance and $f$-divergences.}

\section{Combining Wasserstein-1 and Wasserstein-2 proximals}
\label{subsec:composition:wasserstein:proximals}
The main contribution of this paper is the use of both $\WI$ and $\WII$ proximals of $f$-divergences to train normalizing flows that can learn distributions supported on manifolds. The $\WI$ proximal of $f$-divergences produces Lipschitz-regularized $f$-divergences $D_{f}^{\Gamma_L}$ that enables comparing distributions with differing supports. The $\WII$ proximal of $D_f^{\Gamma_L}$ stabilizes the training of the generative flow by making the optimization problem well-posed. Let $\rho_0$ be the reference distribution, $\pi$ be the target distribution and $\rho_T$ be the approximating distribution to $\pi$. The combined Wasserstein-1 and Wasserstein-2 proximal of $f$-divergences is defined as
\begin{align}
\label{eq:terminal:running:costs}
     &\inf_{\rho_T}  \left\{ D_f^{\Gamma_L} (\rho_T \| \pi) + \frac{\lambda}{2T} \cdot \WII^2 (\rho_0, \rho_T)   \right\} \\
    \label{eq:composed:proximal}
     =& \underbrace{\inf_{\rho_T}  \Bigg\{ \underbrace{ \inf_\sigma \left\{  D_f (\sigma \| \pi) + L \cdot \WI (\rho_T, \sigma) \right \}}_{D_f^{\Gamma_L} = \WI \text{ proximal of } D_f} + \frac{\lambda}{2T} \cdot \WII^2 (\rho_0, \rho_T)  \Bigg\}}_{\WII \text{ proximal of } D_f^{\Gamma_L} } \, .
\end{align}

As is, it is not clear how one may practically compute the objective functional \eqref{eq:composed:proximal}. We describe a computable form of this optimization problem if we consider optimizing for a continuous time flow. The $\WII$ distance is replaced with the Benamou-Brenier formulation \eqref{eq:w2:brenier:bernamou}. Furthermore, to compute $D_f^\Gamma$, we use its dual formulation 
\eqref{eq:kl:lip:divergence:dual} as it is implemented in \cite{birrell2020f,gu2023GPA}. The resulting optimization problem is
\begin{align}
    \inf_{v, \rho}  \Bigg\{ \overbrace{\sup_{\phi \in \Gamma_L}  \left \{ \mathbb{E}_{\rho(\cdot, T)}[\phi] - \mathbb{E}_\pi [f^\star(\phi)] \right \} }^{\text{Dual formulation 
    of $D_f^{\Gamma_L}$ }} + \overbrace{\lambda \int_0^T \int_{\mathbb{R}^d} \frac{1}{2} |v(x,t)|^2 \rho(x,t) dx dt}^{\text{Benamou-Brenier dynamic OT formulation}} \Bigg\} \label{eq:our:objective} \\
    \frac{dx}{dt} = v(x(t),t),\, x(0) \sim \rho_0,\, t\in[0,T].     \nonumber
\end{align}
Again, note that $\Gamma_L$ is the space of $L$-Lipschitz continuous functions and $f^\star$ is the convex conjugate of $f$. Roughly speaking, given a proposed velocity field and samples from $\pi$, the objective functional \eqref{eq:our:learning:objective} is computed by running an ensemble of trajectories with initial conditions starting from reference distribution $\rho_0$. The optimal transport cost is approximated using trajectories. The final location of the ensemble determines $\rho(\cdot,T)$, which is compared with samples from $\pi$ by optimization for a $\phi$ given by the dual formulation of $D_f^\Gamma$. In Section \ref{sec:algorithm:and:indicator}, we will outline the algorithm for computing the objective function \eqref{eq:our:learning:objective} in detail. We call the resulting flows trained on this optimization problem $\boldsymbol{\WI\oplus\WII}$ \textbf{generative flows}.

Through a generative adversarial network perspective, we call the optimal $\phi^*$, defined in \eqref{eq:first:variation}, the \emph{discriminator} that evaluates the similarity between the flow-\emph{generated} distribution $\rho(\cdot,T)$ and the target distribution $\pi$. The discriminator exists uniquely up to an additive constant, is well-defined and $L$-Lipschitz on all of $\R^d$. 

We describe further properties of the discriminator $\phi^*$ in Remark~\ref{rmk:composedprox}, and we refer the reader to \cite{birrell2020f,gu2023GPA} for further technical details. To connect with the notion of proximal operators in Wasserstein space \eqref{eq:proximal:operator} \cite{ParikhBoyd,li2023kernel}, we note that the reference distribution $\rho_0$ is related to the generated distribution $\rho_T$ by
\begin{align}
    \rho_T & =  \textbf{prox}_{\frac{\lambda}{2T}, D_f^{\Gamma_L} (\cdot\|\pi)}^{\WII}(\rho_0)= \arg\min_{\nu \in \mathcal{P}_2(\mathbb{R}^d)} \left\{ D_f^{\Gamma_L} (\nu\|\pi) + \frac{\lambda}{2T} \cdot \WII^2 (\rho_0 ,\nu) \right \}. \label{eq:proximaloperatorflow}
\end{align}

\begin{remark}[Composition of Wasserstein proximals and its interpretation] \label{rmk:composedprox}
One may notice that while we use two Wasserstein proximals to construct the $\WI\oplus\WII$ generative flow, there is only a single ($\WII$) proximal operator that relates $\rho_T$ to $\rho_0$ in \eqref{eq:proximaloperatorflow}. There is an implicit second $\WI$ proximal operator being computed when calculating the $D_f^\Gamma$ divergence. In addition to $\rho_0$, $\rho_T$, and $\pi$, there is a fourth distribution $\sigma^*$ that is produced when comparing $\rho_T$ with $\pi$. We have
\begin{align}
    \sigma^* = \textbf{prox}_{L, D_f (\cdot\|\pi)}^{\WI} (\rho_T) = \arg\min_{\sigma \in \mathcal{P}_1(\mathbb{R}^d)} \left\{ D_f (\sigma\|\pi) + L \cdot \WI (\rho_T ,\sigma) \right \},
    \end{align}
where $\sigma^*$ can be interpreted as an intermediate measure that is close to $\rho_T$ in the $\WI$ distance while sharing the same support as $\pi$ so that they can be compared via the $f$-divergence. Moreover, by \eqref{eq:proximaloperatorflow}, we see that $\sigma^*$ is equal to a \emph{composition} of the $\WI$ and $\WII$ proximal operators
    \begin{align}
        \sigma^* = \left(\textbf{prox}_{L, D_f (\cdot\|\pi)}^{\WI} \circ \textbf{prox}_{\frac{\lambda}{2T},  D_f^{\Gamma_L} (\cdot\|\pi)}^{\WII} \right) ( \rho_0 ). 
    \label{eq:sigma:double:proximals}
    \end{align}
The measure $\sigma^*$ is not explicitly computed in our algorithm as we opt to work with the dual formulation of the $\WI$ proximal of $D_f$. However, $\sigma^*$ is related to the optimal discriminator $\phi^*$ by the relation
\begin{align}
   \frac{d\sigma^*}{d \pi}(x) = (f^\star)'(\phi^*(x) - c )
\end{align}
for some constant $c$. That is, the likelihood ratio of $\sigma^*$ with respect to the target distribution $\pi$ is equal to the derivative of $f^\star$ evaluated at a constant shift of the optimal discriminator $\phi^*$. See \cite{birrell2020f}, Theorem 25 for further details.

\end{remark}

\section{$\WI\oplus \WII$ generative flows as mean-field games}
\label{sec:W1:W2:generative:flow}
In this section, we study our new $\WI\oplus\WII$ generative flow as a mean-field game (MFG) and discuss the conditions for the combined proximal training objective to be \emph{well-posed}. We will see that the well-posedness of our optimization can be understood through MFG theory. After reviewing the basics of MFG theory for generative flows, we prove that the $\WI\oplus\WII$ generative flows have a well-defined PDE system even for high-dimensional data supported on manifolds. The PDE solutions are \emph{unique} assuming that the solution is smooth, which implies the optimization problem is well-posed, and yields a \emph{unique optimal generative flow}. This uniqueness property of the generative flow is expected to render  corresponding algorithms more stable and not subject to  implementation-dependent outcomes.  Moreover, we show that the optimal trajectories are linear through connections to PDE theory and fluid mechanics.

\subsection{Background on continuous-time generative flows as mean-field games}
\label{subsec:background:mfg}
Generative flows were shown to be trained by solving mean-field game problems in \cite{zhang2023mean}. In a general sense, the mean-field game for deterministic generative flows is given by a terminal cost functional $\mathcal{F}: \mathcal{P}(\R^d) \to \R$, such as the KL divergence with respect to target distribution $\pi$, which measures the discrepancy between $\pi$ and approximating distribution $\rho(\cdot,T)$, and a convex running cost $L(x,v):\R^d\times \R^d \to \R$, which measures the cost particle moving at a particular velocity $v$ at position $x$. The mean-field game is
\begin{align} \label{eq:mfg}
\begin{split}
&\min_{v, \rho}  \, \mathcal{F}(\rho(\cdot, T))  + \int_{0}^T \int_{\mathbb{R}^d} L(x, v(x,t)) \rho(x,t) dx dt \\
\text{s.t. }& \partial_t\rho + \nabla \cdot (v \rho) = 0\,\,\,\, \rho(x, 0) = \rho_0(x)
\end{split}
\end{align}
Via a Lagrangian interpretation, the continuity equation can be replaced by particle trajectories that the differential equation $dx/dt = v(x(t),t)$, where $x(t) \sim \rho(\cdot,t)$. The advantage of the MFG perspective is that its associated \emph{optimality conditions}, which are in the form of a system of backward-forward PDEs, describe the mathematical structure of the optimal velocity field and density evolution \emph{a priori} to computation. Moreover, the lack of optimality conditions implies \emph{ill-posedness} of the optimization problem.

Define the Hamiltonian of the MFG to be the convex conjugate of the running cost function $L$, $H(x,p)= \sup_{v }\left[ -p^\top v - L(x,v) \right]$. The optimizers of \eqref{eq:mfg} are characterized by the optimality conditions
\begin{align}
    \begin{dcases}
    - \frac{\partial U}{\partial t} + H(x,\nabla U) = 0, \,\,\,  & U(x,T) =  \frac{\delta \mathcal{F}(\rho(\cdot,T))}{\delta \rho(\cdot, T)}(x), \\
      \frac{\partial \rho}{\partial t} - \nabla \cdot(\nabla_p H(x,\nabla U) \rho) = 0, \,\,\,\,\, & \rho(x,0) = \rho_0(x). 
    \end{dcases}
    \label{eq:optimality:conditions:general}
\end{align}
The first equation is a backward Hamilton-Jacobi (HJ) equation, while the second equation is the forward continuity equation with the optimal velocity field is determined by the solution to the HJ equation $v^* = -\nabla_p H(x,\nabla U)$. The well-posedness of the optimization problem for training CNFs \eqref{eq:mfg} can then be studied via PDE theory of the optimality conditions \eqref{eq:optimality:conditions:general}.

Empirically, the optimization problem for learning canonical CNFs, i.e., when $\mathcal{F}(\rho)$ is the KL divergence and $L(x,v) = 0$, is known to exhibit instabilities during training as there are multiple  flows that can minimize the objective functional \cite{ott2020resnet,finlay2020train,onken2021ot}. This fact can be immediately seen through MFGs since with $L = 0$, there is no well-defined Hamiltonian, meaning there is no HJ equation, and therefore no optimality conditions. It is also easy to see that there are infinitely many optimal flows in this setting. See \cite{zhang2023mean} for further details.  On the other hand, with an optimal transport cost $L(x,v) = \frac{1}{2}|v|^2$, \cite{onken2021ot,finlay2020train} found that training stabilized, and \cite{onken2021ot} derived the associated Hamilton-Jacobi equation and used it to further accelerate the training process.

\subsection{MFG for $\WI\oplus \WII$ generative flows have well-defined optimality conditions and imply \emph{linear} optimal trajectories}

Wasserstein-2 proximals stabilize the training of generative flows, while Wasserstein-1 proximals enables manifold learning. In this section, we back up these claims by studying the PDE system that arises from their optimality conditions. In particular, we will see the $\WII$ proximal yields a well-defined Hamilton-Jacobi equation for a potential function $U(x,t)$, while the $\WI$ proximal produces well-defined terminal condition $U(x,T)$.

\begin{theorem}[MFG for $\WI\oplus\WII$ proximal generative flow]
\label{thm:main}
Let $\pi$ be an unknown arbitrary target measure, $\rho(x, 0) = \rho_0$ be a given reference measure, and $v: \mathbb{R}^d \times \mathbb{R} \rightarrow \mathbb{R}^d$ be a vector field. Fix a terminal time $T >0$ and $\lambda >0$. The optimization problem 
\begin{equation}
\label{eq:our:learning:objective}
    \inf_{v, \rho} \left\{ \sup_{\phi \in \Gamma_L}  \left \{ \mathbb{E}_{\rho(\cdot, T)}[\phi] - \mathbb{E}_\pi [f^\star(\phi)] \right \} + \lambda \int_0^T \int_{\mathbb{R}^d} \frac{1}{2} |v(x,t)|^2 \rho(x,t) dx dt \right\}
\end{equation}
where $\rho(x,t)$ satisfies the continuity equation $\partial_t \rho + \nabla \cdot (\rho v) = 0,$ $\rho(x,0) = \rho_0(x)$ has the following optimality conditions: 
\begin{enumerate}
    \item \textbf{Wasserstein-2 proximal yields a well-posed Hamilton-Jacobi equation. } The optimal velocity $v^*$ is given by $v^* = -\frac{1}{\lambda}\nabla U$ where $U$ and $\rho$ satisfy a system consisting of a backward Hamilton-Jacobi equation and a forward continuity  equation for $ t \in [0,T]$
    \begin{align}
    \begin{dcases}
    - \partial_t U + \frac{1}{2\lambda}|\nabla U|^{2} = 0 \,\,\,\,\, & \\
      \partial_t \rho - \nabla \cdot \left(\rho \frac{\nabla U}{\lambda} \right) = 0 \,\,\,\,\, & \text{where} \,\,\,\, \rho(x,0) = \rho_0(x). 
    \end{dcases}
    \label{eq:optimality:conditions1}
\end{align}

\item \textbf{Wasserstein-1 proximal provides a well-defined terminal condition. } The dual formulation and the variational derivative of the Wasserstein-1 proximal  $D_f^{\Gamma_L}(\rho(\cdot, T) \| \pi)$, 
\eqref{eq:kl:lip:divergence:dual} and \eqref{eq:first:variation} respectively, provide the terminal condition of the Hamilton-Jacobi equation given as
\begin{equation}
\label{eq:optimality:conditions2}
    U(x, T) = \frac{\delta D_f^{\Gamma_L}(\rho(\cdot, T) \| \pi)}{\delta \rho(\cdot, T)}(x) =  \arg\max_{\phi \in \Gamma_L}  \left \{ \mathbb{E}_{\rho(\cdot, T)}[\phi] - \mathbb{E}_\pi [f^\star(\phi)] \right \} = \phi^*(x).
\end{equation}

\item \textbf{Optimal trajectories are linear.}
The optimal velocity field is $v(x,t) = -\frac{1}{\lambda} \nabla U(x,t)$. For an initial condition $x(0) \sim \rho(\cdot,0)$, the optimal trajectories have the following representations
\begin{align} \label{eq:trajectories}
    x(t)& = x(0) - \frac{1}{\lambda} \int_0^t \nabla U(x(s),s) ds =   x(0) - \frac{t}{\lambda} \nabla U(x(0),0) \nonumber \\
    & = x(T) - \frac{1}{\lambda}\int_T^t \nabla U(x(s),s) ds   = x(T) + \frac{T-t}{\lambda} \nabla U(x(T),T) \\ & = x(T) + \frac{T-t}{\lambda}\nabla\phi^*(x(T)) \nonumber
\end{align}
meaning that they are \emph{exactly} linear. 
\end{enumerate}

\end{theorem}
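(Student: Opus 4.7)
The plan is to prove all three parts by specializing the general MFG optimality framework of \cref{subsec:background:mfg} to the specific choices of quadratic running cost $L(x,v) = \frac{\lambda}{2}|v|^2$ and terminal functional $\mathcal{F}(\rho(\cdot,T)) = D_f^{\Gamma_L}(\rho(\cdot,T)\|\pi)$, and then extracting the characteristic structure of the resulting Hamilton-Jacobi equation. The key technical inputs I would assemble in advance are: (i) the general optimality system \eqref{eq:optimality:conditions:general}, (ii) the dual/variational characterization of $D_f^{\Gamma_L}$ in \eqref{eq:kl:lip:divergence:dual}--\eqref{eq:first:variation} (Theorem~2.1 of \cite{gu2023GPA}), and (iii) the method of characteristics for first-order HJ equations.

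For Part 1, I would compute the Hamiltonian $H(x,p) = \sup_{v} \{-p^\top v - \frac{\lambda}{2}|v|^2\}$; the supremum is attained at $v = -p/\lambda$, yielding $H(x,p) = \frac{1}{2\lambda}|p|^2$ and $\nabla_p H(x,p) = p/\lambda$. Substituting into \eqref{eq:optimality:conditions:general} immediately produces \eqref{eq:optimality:conditions1} together with $v^* = -\nabla_p H(x,\nabla U) = -\nabla U/\lambda$. For Part 2, the general MFG framework prescribes $U(x,T) = \delta\mathcal{F}/\delta\rho(\cdot,T)(x)$, so the claim reduces to identifying the variational derivative of $D_f^{\Gamma_L}(\cdot\|\pi)$ at $\rho(\cdot,T)$. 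I would cite \eqref{eq:first:variation} (Theorem~2.1 and Lemma~2.3 of \cite{gu2023GPA}), which guarantees that this derivative exists for every pair $\rho(\cdot,T),\pi \in \mathcal{P}_1(\R^d)$---crucially with no absolute continuity assumption---and coincides with the $L$-Lipschitz optimizer $\phi^*$ of the dual problem. This is exactly where the $\WI$ proximal is indispensable: without the Lipschitz regularization, the $f$-divergence has no variational derivative when $\rho(\cdot,T)$ and $\pi$ are mutually singular, so no terminal condition for the HJ equation can be written down.

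For Part 3, I would argue that $\nabla U$ is conserved along optimal trajectories by the method of characteristics. Differentiating the HJ equation $\partial_t U = \frac{1}{2\lambda}|\nabla U|^2$ in space gives $\partial_t \nabla U = \frac{1}{\lambda}(\nabla^2 U)\nabla U$. Along the flow $\dot x(t) = -\nabla U(x(t),t)/\lambda$ the chain rule yields
\[
\frac{d}{dt}\nabla U(x(t),t) = \partial_t \nabla U + (\nabla^2 U)\dot x = \frac{1}{\lambda}(\nabla^2 U)\nabla U - \frac{1}{\lambda}(\nabla^2 U)\nabla U = 0,
\]
so $\nabla U(x(t),t) = \nabla U(x(0),0) = \nabla U(x(T),T) = \nabla\phi^*(x(T))$ by Part 2. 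Since $\dot x$ is then constant, integrating the ODE from either endpoint produces the four equalities in \eqref{eq:trajectories}. The main obstacle is regularity: the characteristic computation uses $\nabla^2 U$ and requires $U$ to be at least $C^2$ along the trajectory, which is a classical delicate point for HJ equations because characteristics can cross and shocks can form. I would handle this by adopting the smoothness hypothesis the paper itself invokes for its uniqueness discussion, and note that within that smooth regime the conservation of $\nabla U$ along characteristics, and hence linearity, follows cleanly.
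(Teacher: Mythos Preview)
Your proposal is correct and follows essentially the same route as the paper: Parts~1--2 are obtained by specializing the general MFG optimality conditions with $H(x,p)=\frac{1}{2\lambda}|p|^2$ and invoking \eqref{eq:first:variation} for the terminal condition, and Part~3 is proved by differentiating the HJ equation in space and showing $\nabla U$ is conserved along the flow $\dot x = -\nabla U/\lambda$. The only cosmetic difference is that the paper first rewrites the gradient of the HJ equation as the pressureless Euler/Burgers system $\partial_t v + (v\cdot\nabla)v = 0$ and then identifies the optimal trajectories as its characteristics, whereas you compute $\frac{d}{dt}\nabla U(x(t),t)=0$ directly; both arguments are the same chain-rule computation and both defer the regularity of $U$ to the smoothness hypothesis used elsewhere in the paper.
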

\begin{proof}
    The proof is a direct application of the general MFG optimality conditions in \eqref{eq:optimality:conditions:general} \cite{zhang2023mean}. Observe that the Hamiltonian is
    \begin{align}
        H(x,p) = \sup_{v \in \R^d} \left\{ -p^\top v - \frac{\lambda}{2} |v|^2 \right\} = \frac{1}{2\lambda}|p|^2. 
    \end{align}
    A key element of this proof is the use the variational derivative \eqref{eq:first:variation} of the Wasserstein-1 proximal $D_f^{\Gamma_L}(\rho(\cdot, T) \| \pi)$, which exists for any probability measures in $\mathcal{P}_1$ without any absolute continuity or density assumptions, and is equal to the discriminator $\phi^*$, as discussed in Section~\ref{subsec:background:w1} and \cite{gu2023GPA}. This terminal condition is $L$-Lipschitz continuous everywhere in $\R^d$. 

    To see that the optimal trajectories are linear, consider the (system of) PDEs associated with the optimal velocity field $v(x,t) = -\frac{1}{\lambda} \nabla U(x,t)$, which is derived from the Hamilton-Jacobi equation 
    \begin{align}
        -\frac{1}{\lambda}\partial_t \nabla U + \frac{1}{\lambda^2} \nabla U \cdot \nabla(\nabla U)= 0 
        \implies  \partial_t v + (v\cdot \nabla) v = 0. \label{eq:velocitypdes}
    \end{align}
    We implicitly assume that $U$ is sufficiently differentiable. See \cref{rmk:classicalsolution} for further comments on regularity. Note that together with the continuity equation in \eqref{eq:optimality:conditions1}, this system of PDEs arises in inviscid compressible fluid mechanics and is called the Euler equations \cite{gangbo2009optimal}. 
    Next, consider the \emph{characteristics} for the PDEs of $v$, i.e., the level curves on which $v$ is constant \cite{evans2022partial}. Let $y(t)$ be the characteristics which satisfy
    \begin{align}
        \frac{d}{dt} v(y(t),t) = \partial_t v(y(t),t) + \partial_t y(t) \cdot \nabla v(y(t),t) = 0.
    \end{align}
    However, from \eqref{eq:velocitypdes}, we know that $y(t)$ is a characteristic only if 
    \begin{align}\partial_t y(t) =v(y(t),t) = -\frac{1}{\lambda} \nabla U(y(t),t)\end{align} which is exactly the evolution of the optimal trajectories $x(t)$ in \eqref{eq:trajectories}. This means that the optimal trajectories are precisely the characteristics of the velocity field PDEs, implying that the \emph{velocity is constant along the optimal trajectories}. Therefore, $v(x(t),t) = v(x(0),0) = -\frac{1}{\lambda} \nabla U(x(0),0)$ for all $t\in[0,T]$, and so 
    \begin{align}
        x(t) = x(0) + \int_0^t v(x(s),s) ds = x(0) - \frac{t}{\lambda}\nabla U(x(0),0). 
    \end{align}
The optimal trajectories may instead be characterized by with their terminal endpoints. Along optimal trajectories, $v(x(t),t) = v(x(T),T) = -\frac{1}{\lambda} \nabla U(x(T),T)$. The terminal condition is equal to the discriminator, i.e., $U(x,T) = \phi^*(x)$. Therefore, 
\begin{align}
    x(t) &= x(T) - \frac{1}{\lambda} \int_T^t \nabla U(x(s),s) ds = x(T) +\frac{T-t}{\lambda}\nabla U(x(T),T) \\
    & = x(T) + \frac{T-t}{\lambda} \phi^*(x(T)). 
\end{align}
    
\end{proof}

\subsection{Wasserstein-1 proximal provides robust learning of distributions on manifolds}

 The selection  of the Wasserstein-1 proximal divergence \eqref{eq:kl:lip:divergence:primal} and \eqref{eq:kl:lip:divergence:dual} as terminal condition in \eqref{eq:our:learning:objective} relied on  two important features of this new divergence $D_f^{\Gamma_L}(P\|Q)$: (a)  it can compare mutually singular measures---a property inherited by Wasserstein-1---as can be readily seen in \eqref{eq:fgdivergence:bounds};  (b) it is smooth, i.e. has a variational derivative \eqref{eq:first:variation}, a property inherited by the KL divergence or other f-divergence. The latter is important in the well-posedness of the generative flow as captured by the MFG optimality conditions \eqref{eq:optimality:conditions1}
 and \eqref{eq:optimality:conditions2} of  \cref{thm:main}. 

\paragraph{$f$-divergences fail to learn manifolds.}
When an $f$-divergence is used as the terminal cost to train generative flows, even with a $\WII$ proximal, the mean-field game \eqref{eq:our:learning:objective} will not be well-posed. The terminal condition \eqref{eq:optimality:conditions2} is the variational derivative of $\mathcal{F}(\rho) = D_f(\rho\|\pi)$ for a fixed $\pi$, $  \frac{\delta D_f(\rho\|\pi)}{\delta \rho} = (f^\star)'\left(\frac{d\rho}{d \pi} \right)$. 
In particular, for the KL divergence 
\begin{align}
    \frac{\delta D_{KL}(\rho\|\pi)}{\delta \rho}(x) = 1+ \log \frac{d\rho}{d \pi}(x). 
\end{align}
If $\rho$ is defined as the pushforward of a standard normal distribution in $\R^d$ under invertible mapping, and the support of $\pi$ is on a lower dimensional manifold, then the terminal condition will blow up for $x \notin \text{supp}(\pi)$. In essence, the KL divergence becomes \emph{uninformative} for learning, when the target distribution is supported on a different manifold than $\rho$.

\paragraph{$\WI$ proximal of $D_f$ provides robust manifold learning}

{\rm
    First, \eqref{eq:fgdivergence:bounds}  implies that the Wasserstein-1 proximal regularized $f$-divergence ball, $\{\rho: D_{f}^{\Gamma_L}(\rho\|\pi)\le 1 \}$ contains both the $f$-divergence ball $\{\rho: D_{f}^{\Gamma_L}(\rho\|\pi)\le 1 \}$
 and the Wasserstein-1 ball $\{\rho: L \cdot W_1(\rho, \pi)\le 1 \}$. This is an important  point in our context because algorithmically a KL or an  $f$-divergence ball around a target distribution $\pi$ that is supported on a low-dimensional manifold, as is typical in many high-dimensional data sets such as images, will only include other distributions with supports  on the \emph{same} (unknown) manifold. On the other hand a ball in  $D_{f}^{\Gamma_L}(\cdot\|\pi)$
does not have this constraint and hence it is expected to be mathematically and computationally more robust for generative tasks. In particular, when learning a target distribution $\pi$  supported on an unknown manifold,  generated models $\rho_T$ can easily be singular with respect to $\pi$ unless great care is employed using carefully constructed autoencoders and/or neural architectures. Our use of the Wasserstein-1 proximal $D_{f}^{\Gamma_L}(\cdot\|\pi)$ does away with such needs, as \Cref{thm:main} suggests and as we demonstrate computationally  in \Cref{sec:numerical:examples}. This discussion also has connection to distributionally robust optimization (DRO) \cite{Rahimian2019DistributionallyRO}.
}

\paragraph{Well-defined variational derivative implies stable training}
{\rm The variational derivative $\phi^*$ in \eqref{eq:first:variation}  for the terminal condition exists and is always well-defined due to our choice of the terminal cost $\mathcal{F}$ as the Wasserstein-1 proximal of an $f$-divergence $D_f^{\Gamma_L}(\cdot \| \pi)$. Here the use of a proximal  is necessary:  (i) in the case of a pure  $f$-divergence the  variational derivative will blow up for  mutually  singular measures, while  (ii) in the case of a pure Wasserstein-1 distance, there is no uniquely defined variational derivative. 
The MFG well-posedness analysis of \Cref{thm:main} allows us to make rigorous the following heuristics. In the optimization process over possible velocities $v=v(x, t)$ in  our learning objective in \Cref{thm:main},  namely
\begin{align}
    \inf_{v, \rho} & \left\{ 
    D_f^{\Gamma_L} (\rho_T \| \pi)
     + \lambda \int_0^T \int_{\mathbb{R}^d} \frac{1}{2} |v(x,t)|^2 \rho(x,t) dx dt \right\}\, ,
    \label{eq:our:objective:end}
\end{align}
the existence of the variational derivative \eqref{eq:first:variation} of the terminal cost $D_f^{\Gamma_L} (\rho_T \| \pi)$ for any perturbations of measures including singular ones, suggests that this minimization problem is smooth in $v$ and the optimal generative flow can be  theoretically  discovered by  gradient optimization. 
We discuss and implement such algorithms inspired by \Cref{thm:main} in Section~\ref{sec:algorithm:and:indicator}. 
}

\subsection{Uniqueness of $\WI\oplus \WII$ generative flows implies well-posedness of optimization problem}

In Theorem~\ref{thm:main} we have  shown that the optimization problem \eqref{eq:our:learning:objective} and the corresponding backward-forward MFG  system \eqref{eq:optimality:conditions1} are  well-defined in the sense that the Hamiltonian $H$ exists and the terminal condition \eqref{eq:optimality:conditions2} is always well-defined.
Next we show that \eqref{eq:optimality:conditions1}
and \eqref{eq:optimality:conditions2}
have  a \textit{unique smooth solution} for a bounded domain with periodic boundary conditions, i.e., a torus.
This is a key result of this paper because it  implies that the corresponding  generative flow  with velocity field $v^*(x, t)$ is also unique and therefore can be learned in a stable manner. We will also demonstrate that latter point in our experiments in Section~\ref{sec:numerical:examples}.

\begin{theorem}[Uniqueness of Wasserstein-1/Wasserstein-2 proximal generative flows]
\label{thm:uniqueness}
    If the  backward-forward PDE system \eqref{eq:optimality:conditions1} with terminal condition
    \eqref{eq:optimality:conditions2} has smooth solutions $(U, \rho^*)$ on the torus  $\Omega$, 
    then they are unique and the solution to the optimization problem \eqref{eq:our:learning:objective} is also unique.
\end{theorem}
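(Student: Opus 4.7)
The plan is to adapt the classical Lasry--Lions monotonicity argument for MFG uniqueness to our setting, where convexity of the terminal cost $\mathcal{F}(\rho) = D_f^{\Gamma_L}(\rho\|\pi)$ plays the role of the Lasry--Lions monotonicity condition on the coupling. Suppose $(U_1, \rho_1)$ and $(U_2, \rho_2)$ are two smooth solutions of \eqref{eq:optimality:conditions1}--\eqref{eq:optimality:conditions2} on the torus $\Omega$, starting from the same $\rho_0$. Set $W = U_1 - U_2$ and $\mu = \rho_1 - \rho_2$. Subtracting the two HJ equations and the two continuity equations gives
\begin{align*}
-\partial_t W + \tfrac{1}{2\lambda}(|\nabla U_1|^2 - |\nabla U_2|^2) &= 0,\\
\partial_t \mu - \tfrac{1}{\lambda}\nabla\cdot(\rho_1 \nabla U_1 - \rho_2 \nabla U_2) &= 0,
\end{align*}
with $\mu(\cdot,0) = 0$ and $W(\cdot,T) = \phi^*_1 - \phi^*_2$, where $\phi^*_i$ is the discriminator associated with $\rho_i(\cdot,T)$.

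The central step is to compute $\tfrac{d}{dt}\int_\Omega \mu W\,dx$ and exploit the key algebraic identities
\[
\rho_1\nabla U_1 - \rho_2\nabla U_2 = \tfrac{1}{2}(\rho_1+\rho_2)\nabla W + \tfrac{1}{2}\mu(\nabla U_1 + \nabla U_2),\qquad
|\nabla U_1|^2 - |\nabla U_2|^2 = \nabla W\cdot(\nabla U_1 + \nabla U_2).
\]
Using integration by parts on the torus (no boundary terms), the cross terms $\tfrac{1}{2\lambda}\int \mu\nabla W\cdot(\nabla U_1+\nabla U_2)\,dx$ cancel exactly between the two contributions, and one is left with the clean identity
\[
\frac{d}{dt}\int_\Omega \mu W\,dx = -\frac{1}{2\lambda}\int_\Omega (\rho_1 + \rho_2)|\nabla W|^2\,dx.
\]
Integrating from $0$ to $T$ and using $\mu(\cdot,0)=0$ yields
\[
\int_\Omega (\rho_1(\cdot,T) - \rho_2(\cdot,T))(\phi^*_1 - \phi^*_2)\,dx = -\frac{1}{2\lambda}\int_0^T\!\!\int_\Omega (\rho_1 + \rho_2)|\nabla W|^2\,dx\,dt.
\]

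Next I would invoke the monotonicity of the terminal coupling. Because the dual formulation \eqref{eq:kl:lip:divergence:dual} expresses $\rho\mapsto D_f^{\Gamma_L}(\rho\|\pi)$ as a supremum of affine functionals of $\rho$, it is convex in $\rho$; its variational derivative $\phi^*[\rho]$ (existing globally by the smoothness property of $D_f^{\Gamma_L}$ established in Section~\ref{subsec:background:w1}) is therefore monotone, i.e.\
\[
\int_\Omega(\rho_1(\cdot,T)-\rho_2(\cdot,T))(\phi^*_1-\phi^*_2)\,dx \;\ge\; 0.
\]
Combining this with the identity above forces both sides to vanish. Hence $(\rho_1+\rho_2)|\nabla W|^2 \equiv 0$ on $[0,T]\times\Omega$, so $\nabla U_1 = \nabla U_2$ on the support of $\rho_1+\rho_2$. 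Since the optimal velocity is $-\nabla U/\lambda$, the two continuity equations reduce to the same transport equation with the same initial datum $\rho_0$, giving $\rho_1=\rho_2$. With identical $\rho(\cdot,T)$, the terminal condition $U(\cdot,T)=\phi^*$ is the same (up to an additive constant, irrelevant for the velocity field), and uniqueness of smooth solutions of the backward HJ equation with given terminal data on the torus then yields $U_1 = U_2$ up to a constant, hence a unique $v^* = -\nabla U/\lambda$; uniqueness of the minimizer of \eqref{eq:our:learning:objective} follows.

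The main obstacle is justifying the monotonicity of the coupling for our specific terminal functional $D_f^{\Gamma_L}(\cdot\|\pi)$. The convexity-from-dual-representation argument is clean, but one must be careful that the variational derivative in \eqref{eq:first:variation} is well-defined as an element of $\Gamma_L$ (not merely up to constants on disjoint supports) so that the pairing $\int \mu\,(\phi^*_1-\phi^*_2)\,dx$ is unambiguous; this is exactly the content of Lemma 2.3 and Remark 2.4 of \cite{gu2023GPA}, which canonically selects the $L$-Lipschitz extension on all of $\mathbb{R}^d$. A secondary subtlety is the smoothness assumption on $(U,\rho^*)$: it guarantees that all integration-by-parts steps and differentiations under the integral are valid, so no weak-solution machinery is needed, and the conclusion is uniqueness \emph{within the class of classical solutions}, as stated.
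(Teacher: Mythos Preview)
Your proof is correct and follows essentially the same Lasry--Lions argument as the paper: compute $\tfrac{d}{dt}\int_\Omega (U_1-U_2)(\rho_1-\rho_2)\,dx$, integrate by parts on the torus, integrate in time, and use monotonicity of the terminal coupling together with $\mu(\cdot,0)=0$ to force $\nabla U_1=\nabla U_2$ on the supports. The only cosmetic differences are that you exploit the quadratic Hamiltonian to obtain an exact identity where the paper writes a Taylor-expansion inequality (they coincide here), and you derive the monotonicity from convexity of $\rho\mapsto D_f^{\Gamma_L}(\rho\|\pi)$ as a supremum of affine functionals, whereas the paper isolates this as a separate lemma proved directly from optimality of $\phi_i^*$ in the dual formulation.
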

 We follow the strategy outlined in Theorem 2.5 of \cite{lasry2007mean}. The main idea of the proof is to consider the weak form of the continuity equation where we use the solutions of the HJ equation as test functions. We focus on the periodic domain (torus) case for the sake of simplicity, although other boundary conditions can be considered, see \cite{lasry2007mean}. The proof requires two properties: (a) the convexity of the Hamiltonian $H(x,p) = \frac{1}{2\lambda}|p|^2$; and (b) the terminal conditions \eqref{eq:optimality:conditions2} is necessarily increasing in some suitable sense in $\rho$. Convexity of the Hamiltonian is clear. As for the mononicity requirement, we first prove the following Lemma \ref{lemma:monotonicity}. The result is based on the properties of the $\WI$-proximal. 

\begin{lemma}[Monotonicity of \eqref{eq:optimality:conditions2}]
\label{lemma:monotonicity}
    If $\phi_i^*$ is the optimizer \eqref{eq:optimality:conditions2} of the dual problem \eqref{eq:kl:lip:divergence:dual}
    for $D_f(\rho_i(\cdot, T) \| \pi)$ for $i=1,2$, then
    \begin{equation}
    \label{eq:monotonicity}
        \int_\Omega (\phi_1^*(x) - \phi_2^*(x)) d(\rho_1(\cdot, T) - \rho_2(\cdot, T))(x) \geq 0
    \end{equation}
    for all $\rho_1(\cdot, T), \rho_2(\cdot, T) \in \mathcal{P}(\Omega)$.
\end{lemma}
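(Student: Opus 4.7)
The plan is to exploit the variational characterization of $\phi_i^*$ as the optimizer of the dual formulation \eqref{eq:kl:lip:divergence:dual}, combined with the simple but crucial observation that the feasible set $\Gamma_L$ of $L$-Lipschitz functions is the \emph{same} for both problems (it does not depend on $\rho_i$). This is really the classical fact that subgradients of convex functionals are monotone operators: $D_f^{\Gamma_L}(\cdot \| \pi)$ is convex in its first argument as a supremum of linear functionals, and $\phi_i^*$ is (a representative of) its variational derivative at $\rho_i$ by \eqref{eq:first:variation}.

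Concretely, I would first write down the suboptimality inequalities obtained by plugging the ``wrong'' optimizer into each dual problem. Since $\phi_1^*, \phi_2^* \in \Gamma_L$, optimality of $\phi_i^*$ for the dual problem defining $D_f^{\Gamma_L}(\rho_i(\cdot,T) \| \pi)$ gives
\begin{align*}
\mathbb{E}_{\rho_1(\cdot,T)}[\phi_1^*] - \mathbb{E}_\pi[f^\star(\phi_1^*)] &\geq \mathbb{E}_{\rho_1(\cdot,T)}[\phi_2^*] - \mathbb{E}_\pi[f^\star(\phi_2^*)], \\
\mathbb{E}_{\rho_2(\cdot,T)}[\phi_2^*] - \mathbb{E}_\pi[f^\star(\phi_2^*)] &\geq \mathbb{E}_{\rho_2(\cdot,T)}[\phi_1^*] - \mathbb{E}_\pi[f^\star(\phi_1^*)].
\end{align*}
Adding these two inequalities causes the $\mathbb{E}_\pi[f^\star(\cdot)]$ terms to cancel exactly, leaving
\[
\mathbb{E}_{\rho_1(\cdot,T)}[\phi_1^* - \phi_2^*] - \mathbb{E}_{\rho_2(\cdot,T)}[\phi_1^* - \phi_2^*] \geq 0,
\]
which is precisely \eqref{eq:monotonicity} after rewriting as a single integral against the signed measure $\rho_1(\cdot,T) - \rho_2(\cdot,T)$.

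Honestly, I do not expect any real obstacle here; the argument is a two-line manipulation once one recognizes it as the monotonicity of the subdifferential. The only subtle point worth stating explicitly is that $\phi_1^*$ and $\phi_2^*$ are both admissible test functions in each other's dual problem---this uses the fact that $\Gamma_L$ is defined independently of the first measure and that, per Remark~2.4 of \cite{gu2023GPA}, the optimizers can be taken to be $L$-Lipschitz on all of $\mathbb{R}^d$ (or $\Omega$) rather than merely on the supports of the measures. Additive constants in the optimizers are harmless because they integrate to zero against the signed measure $\rho_1(\cdot,T) - \rho_2(\cdot,T)$, which has zero total mass, so the (non-)uniqueness of $\phi_i^*$ up to a constant does not affect \eqref{eq:monotonicity}.
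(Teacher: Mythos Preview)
Your proposal is correct and is essentially the same argument as the paper's: both rely on the two suboptimality inequalities obtained by inserting $\phi_j^*$ into the dual problem for $\rho_i(\cdot,T)$ (which is legitimate because $\Gamma_L$ is common to both problems), and then summing so that the $\mathbb{E}_\pi[f^\star(\cdot)]$ terms cancel. The paper routes the computation through the divergence values $D_f^{\Gamma_L}(\mu_i\|\pi)$ before cancelling, whereas you add the inequalities directly, but the content is identical.
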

\begin{proof}
   For the simplicity we write $\mu_1 = \rho_1(\cdot,T)$ and $ \mu_2=\rho_2(\cdot,T)$.
    We rewrite \eqref{eq:monotonicity} and use the optimality of $\phi_i^*$ for the dual formulation \eqref{eq:kl:lip:divergence:dual} of $D_f^{\Gamma_L}(\mu_i \| \pi)$ with $i=1,2$ to get
    \begin{align}
    \begin{split}
        & \mathbb{E}_{\mu_1}[\phi_1^*] - \mathbb{E}_{\mu_2}[\phi_1^*] - \mathbb{E}_{\mu_1}[\phi_2^*]  + \mathbb{E}_{\mu_2}[\phi_2^*]  \\
        = & \{D_f^{\Gamma_L}(\mu_1 \| \pi) + \mathbb{E}_\pi [f^\star(\phi_1^*)]\} - \mathbb{E}_{\mu_2}[\phi_1^*] - \mathbb{E}_{\mu_1} [\phi_2^*] + \{\mathbb{E}_\pi [f^\star(\phi_2^*)] + D_f^{\Gamma_L}(\mu_2 \| \pi) \}  \\
        \geq & D_f^{\Gamma_L}(\mu_1 \| \pi) - D_f^{\Gamma_L}(\mu_2 \| \pi) - D_f^{\Gamma_L}(\mu_1 \| \pi) + D_f^{\Gamma_L}(\mu_2 \| \pi)   \geq 0. 
    \end{split}
    \end{align}

\end{proof}

We now prove Theorem~\ref{thm:uniqueness}. 

\begin{proof}

Let $(U_1,\rho_1)$ and $(U_2,\rho_2)$ be two smooth solutions of \eqref{eq:optimality:conditions1} and \eqref{eq:optimality:conditions2}. Define $U  = U_1 - U_2$ and $\rho = \rho_1-\rho_2$. We prove $U = 0$ and $\rho = 0$. Based on \Cref{lemma:monotonicity}, 
we have the  monotonicity of the terminal condition $U(\cdot,T)$. Furthermore,  $H(x, p)=\frac{1}{2\lambda}|p|^2$, is convex with respect to the last variable. Without loss of generality, let $\lambda = 1$. Observe that $\nabla_p^2 H(x,p) = I_d$. Now observe that
    \begin{align}
    \label{eq:proof:uniqueness:eq1}
        \begin{split}
        &\frac{d}{dt} \int_\Omega U \rho dx = \int_\Omega [(\partial_t U) \rho + U(\partial_t \rho)] dx \\
        = & \int_\Omega \left(H(x, \nabla U_1) - H(x, \nabla U_2) \right) \rho dx  \\
       + & \int_\Omega U \left(\nabla \cdot (\rho_1 \nabla_p H(x, \nabla U_1)) - \nabla \cdot (\rho_2 \nabla_p H(x, \nabla U_2))\right) dx.
       \end{split}
    \end{align}
Via integration by parts, with $\langle\cdot, \cdot\rangle$ denoting the standard inner product  between vectors in $\mathbb{R}^d$,  we get
    \begin{align}
    \label{eq:proof:uniqueness:eq2}
    \begin{split}
       \eqref{eq:proof:uniqueness:eq1}  = & \int_\Omega \left[(H(x, \nabla U_1) - H(x, \nabla U_2) ) \rho  -  \langle\nabla U, \rho_1 \nabla_p H(x,\nabla U_1)- \rho_2 \nabla_p H(x,\nabla U_2)\rangle \right] dx \\
       =  & - \int_\Omega \rho_1 \left[H(x, \nabla U_2) - H(x, \nabla U_1) )  -  \langle\nabla U_2 - \nabla U_1, \nabla_p H(x,\nabla U_1)\rangle \right] dx \\
       - & \int_\Omega \rho_2 \left[H(x, \nabla U_1) - H(x, \nabla U_2) )   -  \langle\nabla U_1 - \nabla U_2, \nabla_p H(x,\nabla U_2)\rangle \right] dx. 
       \end{split}
    \end{align}
    Due to the convexity of $H$, a Taylor series expansion yields
    \begin{equation}
        \eqref{eq:proof:uniqueness:eq2} = \frac{d}{dt} \int_\Omega U \rho dx \leq - \int_\Omega \frac{\rho_1 + \rho_2}{2}|\nabla U_1 - \nabla U_2|^2 dx \leq 0.
    \end{equation}
    By integrating this inequality on the time interval $[0, T]$ we obtain
    \begin{equation}
    \label{eq:proof:uniqueness:eq3}
        \int_\Omega U(x, T) \rho(x, T) dx \leq \int_\Omega U(x, 0) \rho(x, 0) dx - \int_0^T \int_\Omega \frac{\rho_1 + \rho_2}{2}|\nabla U_1 - \nabla U_2|^2 dx dt
    \end{equation}
    Using that $\rho(x, 0)=0$  and the monotonicity \eqref{eq:monotonicity}, we have the LHS of \eqref{eq:proof:uniqueness:eq3} is nonnegative.
    It implies that $\int_\Omega U(x, T) \rho(x, T) dx = 0$ resulting in
    \begin{equation}
        \nabla U_1 = \nabla U_2  \text{ in } \{\rho_1 >0\} \cup \{\rho_2 >0\}.
    \end{equation}
    This proves that $\rho_1$ solves the same equation as $\rho_2$, with the same drift $\nabla_p H(x, \nabla U_1)=\nabla_p H(x, \nabla U_2)$ and therefore $\rho_1 = \rho_2$. Correspondingly $U_1$ and $U_2$ solve the same HJ equation and therefore $U_1 = U_2$, as they share the same terminal condition \eqref{eq:optimality:conditions2}.
\end{proof}

From a computational perspective, this theorem provides confidence that any numerical implementation to the optimization problem \eqref{eq:our:objective} is approximating a single unique limiting solution. Algorithmically, this means that the training will not oscillate between multiple modes representing different possible limiting flows.

\begin{remark}\label{rmk:classicalsolution}
    {\rm In \Cref{thm:uniqueness} we assumed the existence of classical solutions to prove uniqueness. However first order HJ do not always have classical solutions, hence further investigation is needed in future work. For instance, we intend  to consider \emph{stochastic} normalizing  flows for which the corresponding MFG has classical solutions \cite{lasry2007mean},  and a uniqueness result in the spirit of \Cref{thm:uniqueness} can hold without additional regularity assumptions.}
\end{remark}

\section{Proximal regularization produces adversarial training algorithm}

\label{sec:algorithm:and:indicator}

In this section, we provide more details as to how the objective function for training $\WI\oplus \WII$ proximal generative flows are computed. The mean-field game formulation in \eqref{eq:our:learning:objective} enables implementation of the new flow through \emph{adversarial} training algorithms. This is due to the dual formulation for computing the $D_f^\Gamma$ divergence, and has been successfully applied in generative modeling tasks through $D_f^\Gamma$-based generative adversarial networks \cite{birrell2020f,birrell2022structure,birrell2023functionspace}, and Wassserstein gradient flows of $D_f^\Gamma$ \cite{gu2023GPA}. A generative adversarial flow has previously been proposed in \cite{grover2018flowGAN}. Our formulation, however, resolves the persistent ill-posedness issues. 

\subsection{An adversarial training algorithm for $\WI\oplus\WII$ generative flows
}

We compute the objective function in \eqref{eq:our:learning:objective}, see also \eqref{eq:our:objective}, and we will see that in contrast to normalizing flows and its $\WII$ proximal regularizations \cite{grathwohl2018ffjord,finlay2020train,onken2021ot}, we only need to simulate forward trajectories of the flow during training. Suppose we are given a dataset $\{X^{(n)} \}_{n = 1}^N \sim \pi$. Let the discriminator $\phi(x;\theta_\phi)$ and potential function $U(x,t;\theta_U)$ be parametrized via neural networks, with parameters $\theta_\phi$ and $\theta_U$, respectively. Recall that $\phi$ needs to be $L$-Lipschitz continuous. Fix an initial reference distribution $\rho_0$ (e.g., a normal distribution), and simulate $M$ independent trajectories with initial condition $\{Y_0^{(m)}\}_{m = 1}^M\sim \rho_0$ and velocity field $v(y,t) = -\frac{1}{\lambda} \nabla_x U(y(t),t)$. The number of simulated trajectories $M$ do not need to match the number of samples $N$ from $\pi$. We follow a discretize-then-optimize approach for training the model parameters, as described in \cite{onken2021ot}. Each trajectory is simulated using a forward Euler scheme. Discretize the time interval $[0,T]$ into $K$ equal segments of length $h$ so that $t_K = T = Kh$. Then for the $m$th trajectory at time $t_k = kh$, we have the recurrence relation
\begin{align}
\label{eq:recurrence:relation}
    Y_{k+1}^{(m)}  = Y_k^{(m)} - \frac{h}{\lambda}\nabla U(Y_k^{(m)},kh) \sim \rho(\cdot,(k+1)h), \,\,\,\, \text{ for } k = 0,\ldots, K-1. 
\end{align}
We first estimate the divergence $D_f^\Gamma(\rho(\cdot,T)\|\pi)$ by finding the optimal $\phi \in \Gamma_L$
\begin{align}\label{eq:discriminator:optimization}
    &\max_{\theta_\phi} \mathcal{J}_\phi(\theta_\phi) =\max_{\theta_\phi} \left\{ \frac{1}{M}\sum_{m=1}^M \phi\left(Y^{(m)}_{K}; \theta_\phi\right) - \frac{1}{N}\sum_{n=1}^N f^\star\left(\phi(X^{(n)}; \theta_\phi)\right) + \mathcal{L}_\phi(\theta_\phi)\right\} \\
    & \text{where } ~~ \mathcal{L}_\phi(\theta_\phi) =  - \sum_{n=1}^{\min (M,N)} \max\left(\left|\nabla \phi (c_n Y_K^{(n)} + (1-c_n)X^{(n)}; \theta_\phi)\right|^2 - L^2, 0\right).
    \label{eq:gp}
\end{align}
Here, $\mathcal{L}_\phi(\theta_\phi)$ penalizes discriminators that have Lipschitz constant greater than $L$ for random convex combinations of $Y_K^{(n)}$ and $X^{(n)}$ \cite{birrell2020f,gulrajani2017improvedtrainingwassersteingans}. Here, $c_n$ is randomly sampled from a uniform distribution on $[0,1]$. Discrminator parameters $\theta_\phi$ are updated according to a gradient ascent method $\theta_\phi^{(l+1)} = \theta_\phi^{(l)} + \eta\nabla_{\theta_\phi} \mathcal{J}_\phi(\theta_\phi^{(l)})$ for some learning rate $\eta$. Let $\phi^*(x) = \phi(x; \theta^*_\phi)$ the resulting optimal discriminator. Thereafter, the objective function $\mathcal{J}(\theta_U)$ for learning the parameters of $U$ is estimated by
\begin{align}
      \mathcal{J}_U(\theta_U) =  \underbrace{\frac{1}{M}\sum_{m=1}^M \phi^*\left(Y^{(m)}_{K}\right) - \frac{1}{N}\sum_{n=1}^N f^\star\left(\phi^*(X^{(n)})\right) }_{\text{Estimates } D_f^{\Gamma_L}(\rho(\cdot,T)\|\pi)} + \underbrace{\frac{\lambda h}{2M} \sum_{m = 1}^M \sum_{k = 0}^{K-1} \left|\frac{\nabla U(Y_k^{(m)},kh; \theta_U)}{\lambda} \right|^2}_{\text{Estimates } \lambda \int_0^T \int_{\R^d} \frac{1}{2}|v(x,t)|^2 \rho(x,t) dxdt}.
\end{align}
Model parameters $\theta_U$ are similarly updated via gradient descent:  $\theta_U^{(l+1)} = \theta_U^{(l)} - \eta\nabla_{\theta_U} \mathcal{J}_U(\theta_U^{(l)})$. From here, new sample trajectories are simulated using the potential at step $l+1$, $U^{(l+1)}(x,t) = U(x,t; \theta_U^{(l+1)})$ and the parameters are updated again. Like GANs, there is a minimax structure in the optimization problem, where the inner optimization problem is solved for each update of the outer problem. Our proposed generative model can be viewed as an \emph{adversarial generative flow} given in terms of a minimax problem for a discriminator $\phi^*$ and a potential function $U^*$. The pseudocode for learning $U^*$ and $\phi^*$ are described in Algorithms~\ref{alg:wasserstein:proximal:generative:flow} and \ref{alg:wasserstein:proximal:discriminator}, respectively.

\begin{algorithm}
\caption{Training $\WI\oplus \WII$ proximal generative flows}\label{alg:wasserstein:proximal:generative:flow}
\begin{algorithmic}[1]

\State \textbf{Input:} Dataset $\{X^{(n)} \}_{n = 1}^N \sim \pi$

\State \textbf{Initialize:} Discriminator $\phi(x;\theta_\phi)$ and potential function $U(x,t;\theta_U)$ with parameters $\theta_\phi$ and $\theta_U$, Lipschitz constant $L$, initial reference distribution $\rho_0$, number of trajectories $M$, time interval $[0,T]$, with step size $h$, $T = Kh$, learning rate $\eta$, $N_{iter}^\phi$, $N_{iter}^U$

\vspace{0.2cm}

\For{$l \gets 1$ to $N^U_{iter}$}

    \State Sample $M$ initial conditions $\{Y_0^{(m)}\}_{m = 1}^M \sim \rho_0$
    \For{$m = 1$ to $M$} \Comment{Simulate trajectories}
        \For{$k = 0$ to $K-1$}
            \State $Y_{k+1}^{(m)} = Y_k^{(m)} - \frac{h}{\lambda}\nabla U(Y_k^{(m)}, kh; \theta_U)$
        \EndFor

    \EndFor

    \vspace{0.2cm}
    \State $\phi^*(x) \gets $ \textsc{LearnDiscriminator}$(\{X^{(n)}\},\{Y_K^{(m)}\},L,\eta,N_{iter}^\phi)$

    \vspace{0.2cm}
    \State $\mathcal{D}_U(\theta_U) \gets \frac{1}{M} \sum_{m=1}^M \phi^*(Y^{(m)}_K) - \frac{1}{N} \sum_{n=1}^N f^\star(\phi^*(X^{(n)}))$ \Comment{Estimate $D_f^{\Gamma_L}(\rho_T\|\pi)$ }

    \vspace{0.2cm}
    \State $\mathcal{J}_U(\theta_U) \gets \mathcal{D}_U(\theta_U) + \frac{\lambda h}{2M} \sum_{m = 1}^M \sum_{k = 0}^{K-1} \left|\frac{\nabla U(Y_k^{(m)}, kh; \theta_U)}{\lambda} \right|^2$ \Comment{Estimate objective}

    \vspace{0.2cm}
    \State $\theta_U \gets \theta_U - \eta \nabla_{\theta_U} \mathcal{J}_U(\theta_U)$ \Comment{Update $\theta_U$ via gradient descent}

    \vspace{0.2cm}
    \State $\mathcal{R}^{(l)}_{HJ} \gets \frac{h}{M} \sum_{m = 1}^M \sum_{k = 0}^{K-1}\left| \partial_t U(Y_k^{(m)},hk;\theta_U) + \frac{1}{2\lambda}\left| \nabla  U(Y_k^{(m)},hk;\theta_U)\right|^2\right| $ 

    \vspace{0.2cm}
    \State $\mathcal{R}_{T}^{(l)} \gets \frac{1}{M}\sum_{m = 1}^M \left|\nabla U(Y_K^{(m)},T; \theta_U) - \nabla \phi^*(Y_K^{(m)}) \right|$ \Comment{Optimality indicators \eqref{eq:indicator1}, \eqref{eq:indicator2}}

    \vspace{0.2cm}
\EndFor

\end{algorithmic}
\end{algorithm}

\begin{algorithm}
\caption{Learning the discriminator $\phi^*(x)$}\label{alg:wasserstein:proximal:discriminator}
\begin{algorithmic}[1]
\Function{LearnDiscriminator}{$\{X^{(n)} \}_{n = 1}^N, \{Y_K^{(m)}\}_{m = 1}^M, L, \eta, N_{iter}^\phi$}
    \State \textbf{Initialize:} Discriminator $\phi(x;\theta_\phi)$ with parameters $\theta_\phi$
    \For{$l \gets 1$ to $N_{iter}^\phi$}
        \vspace{0.2cm}

        \State $c_n \sim \mathcal{U}[0,1]$ for $n = 1, \ldots, \min(M,N)$
            \vspace{0.2cm}

        \State $Z^{(n)} \gets c_n Y_K^{(n)} + (1-c_n)X^{(n)}$ for $n = 1, \ldots, \min(M,N)$
            \vspace{0.2cm}

        \State $\mathcal{L}_\phi(\theta_\phi) \gets - \sum_{n=1}^{\min(M,N)} \max\left(\left|\nabla \phi (Z^{(n)}; \theta_\phi)\right|^2 - L^2, 0\right)$ \Comment{Lipschitz regularization}
            \vspace{0.2cm}

        \State $\mathcal{D}_\phi(\theta_\phi) \gets \frac{1}{M} \sum_{m=1}^M \phi(Y^{(m)}_K; \theta_\phi) - \frac{1}{N} \sum_{n=1}^N f^\star(\phi(X^{(n)}; \theta_\phi))$
            \vspace{0.2cm}

        \State $\mathcal{J}_\phi(\theta_\phi) \gets \mathcal{D}_\phi(\theta_\phi) + \mathcal{L}_\phi(\theta_\phi)$ 
            \vspace{0.2cm}

        \State $\theta_\phi \gets \theta_\phi + \eta \nabla_{\theta_\phi} \mathcal{J}_\phi(\theta_\phi)$ \Comment{Update $\theta_\phi$ via gradient ascent}
            \vspace{0.2cm}

    \EndFor
    \State \textbf{return} $\phi^*(x) = \phi(x; \theta_\phi)$
\EndFunction
\end{algorithmic}
\end{algorithm}

\subsection{Hamilton-Jacobi optimality indicators} 
The optimality conditions 
\eqref{eq:optimality:conditions1} and \eqref{eq:optimality:conditions2}
of Theorem \ref{thm:main} provide \emph{a posteriori} error estimators that may provide insight into the quality of the learned generative flow. Recall that the Hamilton-Jacobi equation \eqref{eq:indicator1} and its terminal condition  \eqref{eq:indicator2}  characterize the optimal velocity field of the generative flow, which means at the end of training, the learned flow should satisfy the HJ equation. 

We define the \emph{Hamilton-Jacobi residual} and the \emph{Hamilton-Jacobi terminal condition error} as follows: 
\begin{align}
\label{eq:indicator1}
   & \text{HJ residual}:  \mathcal{R}_{HJ}(\theta_U) = \frac{h}{M} \sum_{m = 1}^M \sum_{k = 0}^{K-1}\left| \partial_t U(Y_k^{(m)},hk;\theta_U) + \frac{1}{2\lambda}\left| \nabla  U(Y_k^{(m)},hk;\theta_U)\right|^2\right|\\
 & \text{HJ terminal condition error: }  \mathcal{R}_{T}(\theta_U) = \frac{1}{M}\sum_{m = 1}^M \left|\nabla U(Y_K^{(m)},T;\theta_U) - \nabla \phi^*(Y_K^{(m)}) \right|.
 \label{eq:indicator2}
\end{align}
These error functions are not used for training, but rather are used as \emph{a posteriori} indicators of optimality of the learned generative flow. This is in constrast to \cite{onken2021ot}, where the Hamilton-Jacobi equation is used as an additional regularizer during training. Due to  Theorem~\ref{thm:uniqueness}, there is a \textit{unique} (smooth) optimal generative flow and is expected to satisfy the optimality conditions \eqref{eq:optimality:conditions1} and \eqref{eq:optimality:conditions2}. Hence any computed generative flow
based on \Cref{alg:wasserstein:proximal:generative:flow} can be tested against  the corresponding  HJ residuals \eqref{eq:indicator1} and  \eqref{eq:indicator2}. In this sense they can also be used as an optimality indicator for \emph{real-time} termination of our simulation when the HJ residuals are sufficiently small. In \Cref{sec:numerical:examples}, we successfully demonstrate the algorithm on a MNIST dataset that avoids the use of autoencoders. We show that the optimality condition are eventually satisfied in the course of our training. Additional computational costs for calculating \eqref{eq:indicator1} and \eqref{eq:indicator2} take $\mathcal{O}(K M)$ and $\mathcal{O}(Md)$ gradient evaluations, respectively, where $M, K$ are defined in \eqref{eq:indicator1} and  \eqref{eq:indicator2}.

\subsection{Adversarial training eliminates the need for forward-backward simulation}
\label{rmk:algorithm:adversarial}
Continuous Normalizing Flow (CNF) models, such as OT flow \cite{onken2021ot} requires both forward and backward simulation of invertible flows during training. These models learn the flows in the \textit{forward} normalizing direction 
\begin{equation}\label{eq:CNF}
    \inf_{v, \rho} D_{KL} (\rho(\cdot, T), \mathcal{N}_d) + \lambda \int_0^T \int_{\mathbb{R}^d} \frac{1}{2}|v(x,t)|^2 \rho(x,t) dxdt
\end{equation}
where $\mathcal{N}_d$ refers to the $d$-dimensional Gaussian distribution, 
and generate samples using the \textit{backward} flows $\gamma_t - \nabla \cdot (\gamma v) = 0$ for $\gamma(\cdot, t) = \rho(\cdot, T-t)$. 
A main reason for the forward-backward simulation is because the likelihood needs to be computed by integrating over the path to estimate the value of the KL divergence. 
However, in high-dimensional examples, the data $\pi = \rho_0$  often lie on low dimensional manifolds and immediately two challenges arise.

First, due to the lack of absolute continuity, a KL-ball in \eqref{eq:CNF} fails to contain an optimizer, leading to unstable training and mathematically to an undefined terminal condition for the HJ equation $U(\cdot, T)$ if we use $D_{KL}$ as the terminal condition instead of the Wassersten-1 proximal in \Cref{thm:main}. 
A second mathematical and algorithmic reason for the failure of CNFs in the case where the target is supported on a lower dimensional manifold: there is no \textit{invertible} mapping between the Gaussian and a target distribution $\pi$ supported on a lower-dimensional distribution, leading to non-existence of an  inverse transport map  from $\mathcal{N}_d$ to $\rho_0 = \pi$. For this reason, autoencoders are necessary for OT flow \cite{onken2021ot} and potential flow generators \cite{yang2019potential} to embed the flows into the appropriate spaces before applying the generative models.

Unlike CNF-based models such as \eqref{eq:CNF}, our model in \Cref{alg:wasserstein:proximal:generative:flow}, based on \Cref{thm:main}, bypasses both such challenges through: (a)  \emph{adversarial training}, which does not require inversion of the flow,  and (b) \emph{Wasserstein-1 proximal regularized $f$-divergences}, which do not require absolute continuity between measures. Therefore our model is able to generate images without the aid of autoencoders, as we demonstrate next.

\section{Numerical results}
\label{sec:numerical:examples}
We demonstrate that our proposed  generative flows \eqref{eq:our:learning:objective} formulated combining Wasserstein-1 and Wasserstein-2 proximal regularizations facilitate learning distributions that are supported on low-dimensional manifolds. 
We learn the MNIST dataset on $\Omega = [0,1]^{784}$ using the adversarial flow model described in \Cref{alg:wasserstein:proximal:generative:flow}, without the use of any autoencoders. 
To demonstrate the impact of Wasserstein-1 and Wasserstein-2 proximal regularizations, we consider the following four test cases: 
\begin{enumerate}
    \item Unregularized flow: no running cost penalty for $t\in (0,T)$ and the terminal cost is the standard dual formulation of $f$-divergences

    \begin{equation}
       \inf_{v} \left\{ \sup_{\phi \in C(\mathbb{R}^d)} \left\{\mathbb{E}_{\rho(\cdot, T)}[\phi] - \mathbb{E}_\pi [f^\star(\phi)] \right\} : \frac{dx}{dt} = v(x(t),t), \, x(0) \sim \rho(\cdot,0) \right\}
    \end{equation}

    \item Wasserstein-2 proximal of $f$-divergence flow: optimal transport cost for $t \in (0,T)$ and the terminal cost is the standard dual formulation of $f$-divergences

    \begin{align}
        &\inf_{v}\left\{\sup_{\phi \in C(\mathbb{R}^d)} \left\{\mathbb{E}_{\rho(\cdot, T)}[\phi] - \mathbb{E}_\pi [f^\star(\phi)] \right\} + \lambda \int_0^T \int_{\mathbb{R}^d} \frac{1}{2}|v(x,t)|^2 \rho(x,t) dx dt \right\} \\
       & \text{s.t. }\frac{dx}{dt} = v(x(t),t), \, x(0) \sim \rho(\cdot,0) \nonumber
    \end{align}
    
    \item Wasserstein-1 proximal flow of $f$-divergence flow: no running cost penalty for $t\in (0,T)$ and the terminal cost is the regularized dual formulation of $f$-divergences  
    \begin{equation}
    \label{eq:case:w1-proximal:flow}
        \inf_v\left\{\sup_{\phi \in \Gamma_L} \left\{\mathbb{E}_{\rho(\cdot, T)}[\phi] - \mathbb{E}_\pi [f^\star(\phi)] \right\}: \frac{dx}{dt} = v(x(t),t), \, x(0) \sim \rho(\cdot,0) \right\}
    \end{equation}
    
    \item Composed Wasserstein-1/Wasserstein-2 proximal flow (W1/W2 proximal flow) : optimal transport cost for $t \in (0,T)$ and the terminal cost is the regularized $f$-divergence 
    \begin{align}
        \label{eq:case:w1w2-proximal:flow}
        &\inf_{v}\left\{\sup_{\phi \in \Gamma_L} \left\{\mathbb{E}_{\rho(\cdot, T)}[\phi] - \mathbb{E}_\pi [f^\star(\phi)] \right\} + \lambda \int_0^T \int_{\mathbb{R}^d} \frac{1}{2}|v(x,t)|^2 \rho(x,t) dx dt \right\} \\
       & \text{s.t. }\frac{dx}{dt} = v(x(t),t), \, x(0) \sim \rho(\cdot,0) \nonumber
    \end{align}
\end{enumerate}
For our numerical experiments, we choose the Lipschitz constant to be $L = 1$, the weighting parameters to be $\lambda = 0.05$, the terminal time $T = 5$, time step size $h = 1.0$, and $f(x)=-\log(x)$ for the reverse KL divergence. In addition, the discriminator $\phi$ and the potential $U$ share a common neural network architecture; see \Cref{appendix:subsec:nn:architecture}.

\begin{figure}[h]
    \centering
    \begin{subfigure}{.4\linewidth}
        \includegraphics[width=\linewidth]{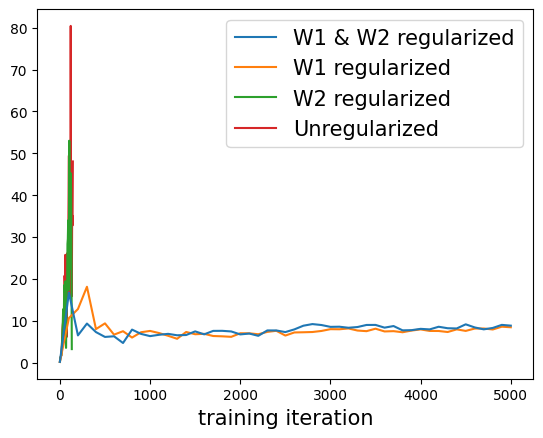}
        \caption{Terminal cost $D_f^{\Gamma_L}(\rho(\cdot,T) \| \pi)$ \eqref{eq:kl:lip:divergence:dual}}
        \label{subfig:terminal:cost}
    \end{subfigure} 
     \begin{subfigure}{.41\linewidth}
     \includegraphics[width=\linewidth]{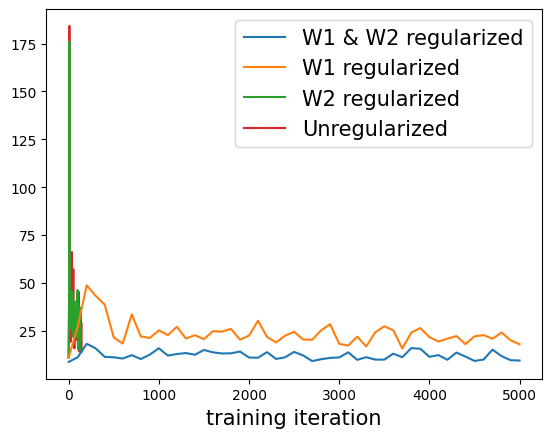}
    \caption{Kinetic energy}
    \label{subfig:kinetic:energy}
    \end{subfigure}
    \begin{subfigure}{.41\linewidth}
        \includegraphics[width=\linewidth]{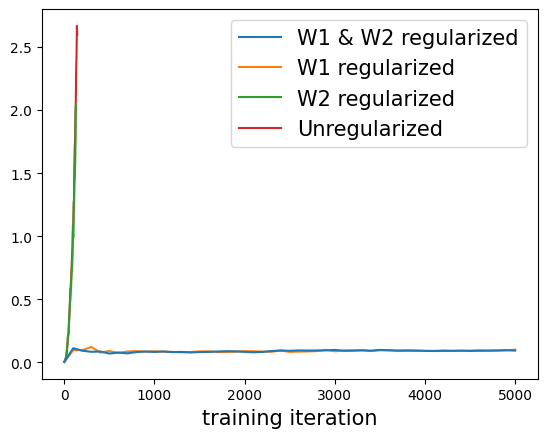}
      \caption{Optimality indicator \eqref{eq:indicator2}}
      \label{subfig:hjb:condition:error}
    \end{subfigure}
    \begin{subfigure}{.4\linewidth}
    \includegraphics[width=\linewidth]{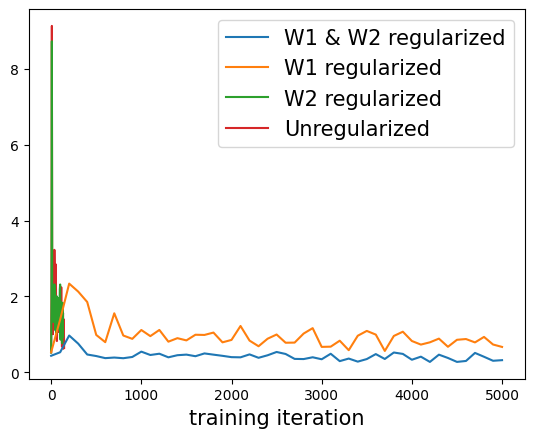}
    \caption{Optimality indicator \eqref{eq:indicator1}}
    \label{subfig:hjb:residual}
    \end{subfigure}
    
    \caption{Evaluation of learning objectives (a - b) and optimality indicators (c - d) from MFG optimality conditions over the course of training. We learn the MNIST dataset using the four test cases with  $T=5.0$, $h=1.0$.  We choose $\lambda=0.05$ and $L=1$ for the weight paramters of Wasserstein-2 and Wasserstein-1 proximals, respectively. In Figure $\bf{(a)}$, observe that the terminal cost  $\mathcal{F}(\rho(\cdot,T)) = D_f^{\Gamma_L}(\rho(\cdot,T) \| \pi)$  diverges (green, red) without the Wasserstein-1 proximal regularization.  
    {\bf (b)} The Wasserstein-2 proximal additionally regularizes the flow to have lower kinetic energy and to be less oscillatory training objectives. Less oscillation is also related to the uniqueness of the MFG solution in \Cref{thm:uniqueness} which in turn is expected to render the algorithms more robust, i.e. in our context  not susceptible to implementation-dependent choices.
    {\bf (c)} As inferred from mean-field game, the learning problem without Wasserstein-1 proximal regularization lacks a well-defined terminal condition.  The exploding optimality indicator \eqref{eq:indicator2} exemplifies this behavior. {\bf (d)} $\WI\oplus\WII$ proximal generative flow results in lower values of the optimality indicator \eqref{eq:indicator1} compared to those from  Wasserstein-1 proximal regularized flow. {\bf (c-d)} show that the optimality indicators can inform when an optimal generative flow has been discovered.     }
    \label{fig:mnist}
\end{figure}

\begin{figure}[h]
    \centering
    \begin{subfigure}{\linewidth}
     \includegraphics[width=.32\linewidth]{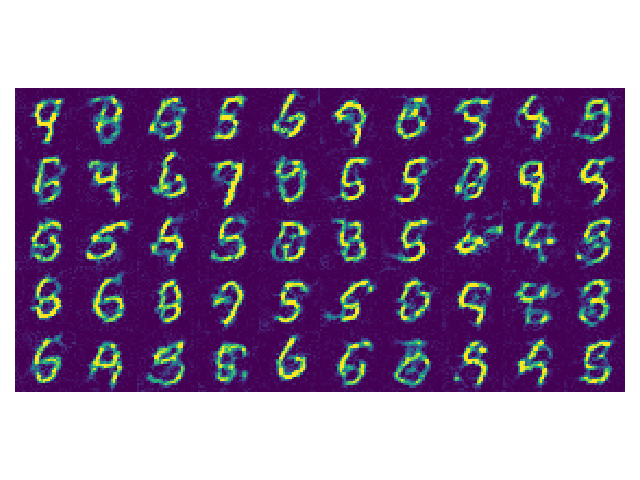}
     \includegraphics[width=.32\linewidth]{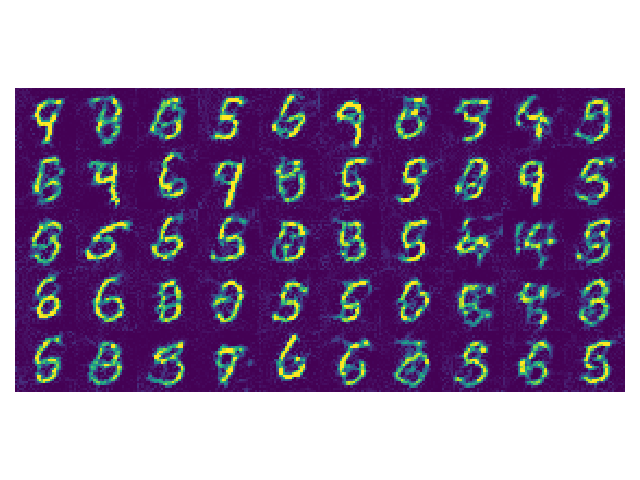}
     \includegraphics[width=.32\linewidth]{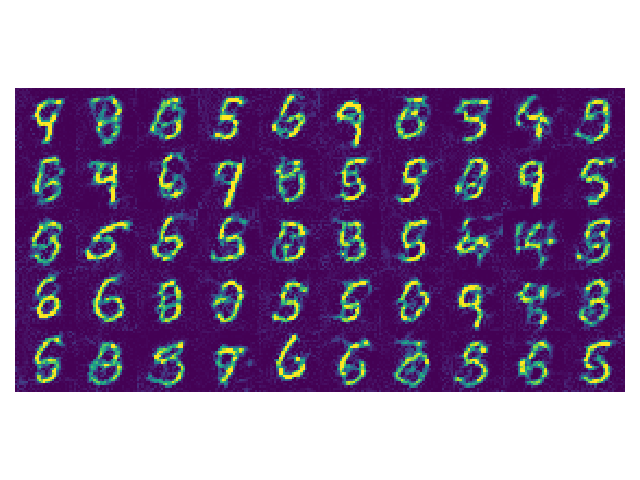}
    \caption{Generated samples from $\WI\oplus\WII$ proximal generative flow \eqref{eq:case:w1w2-proximal:flow} with different time step sizes $h = 2^0$ (left), $h = 2^{-3}$ (center), $h = 2^{-6}$ (right)}
    \label{subfig:w1w2:generated:samples}
    \end{subfigure}
    
    \begin{subfigure}{\linewidth}
     \includegraphics[width=.32\linewidth]{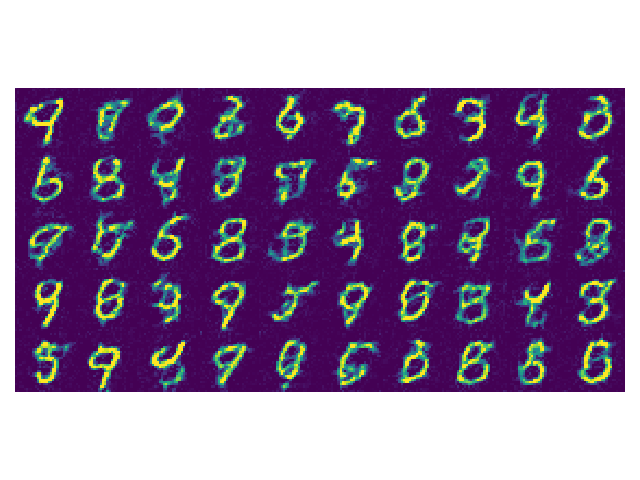}
     \includegraphics[width=.32\linewidth]{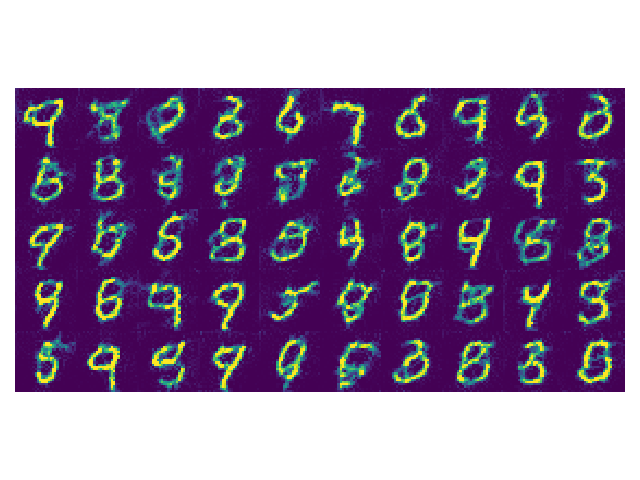}
     \includegraphics[width=.32\linewidth]{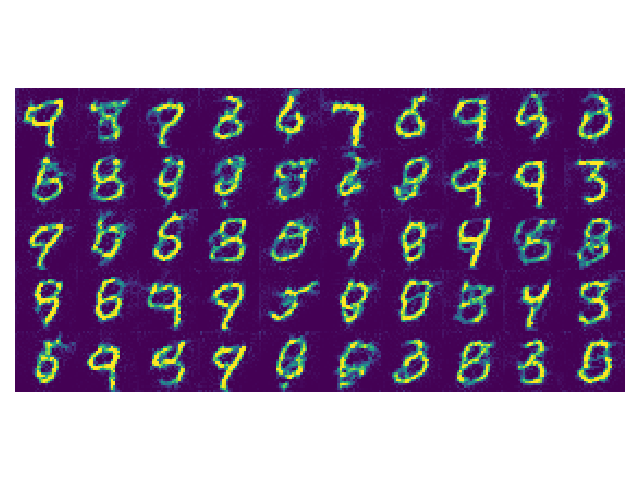}
    \caption{Generated samples from $\WI$ proximal flow \eqref{eq:case:w1-proximal:flow} with different time step sizes $h = 2^0$ (left), $h = 2^{-3}$ (center), $h = 2^{-6}$ (right)}
    \label{subfig:w1w2:generated:samples}
    \end{subfigure}
    \caption{ Wasserstein-2 proximal regularization implies discretization invariance in generative flows. After learning the MNIST dataset $\WI\oplus\WII$  and $\WI$ proximal generative flows with  $T=5.0$, $h=1.0$, we generated samples by integrating the learned vector field $\dot{x}(t) = -\frac{1}{\lambda}\nabla U(x(t), t)$ over time with different time step sizes $h$. 
    In {\bf (a)}, we see that Wasserstein-2 proximal regularization provides almost straight flow trajectories which leads to generated samples which are almost invariant to time discretization. This robustness of the continuous time flow generator ensures high fidelity of generated samples regardless of time discretization. This empirical observation is justified by the theoretical result in 
    \eqref{eq:trajectories}  of Theorem~\ref{thm:main}. On the other hand, we see in {\bf (b)} that without Wasserstein-2 regularization, the resulting vector field is more sensitive to varying step sizes as certain digits may flip to other ones.
    }
    \label{fig:mnist2}
\end{figure}

\Cref{fig:mnist} demonstrates the indispensability of using the Wasserstein-1 proximal regularization for stabilizing the training procedure. Observe that the flows that are not regularized or only rely on Wasserstein-2 regularization \emph{fail} in the middle of their training procedures as their terminal cost blows up soon after initialization (see red and green curves in \Cref{subfig:terminal:cost}). This is because the terminal cost cannot effectively compare measures $\rho(\cdot,T)$ (which is absolutely continuous with respect to $\rho(\cdot,0)$ and $\pi$. The Wasserstein-2 proximal does not avoid this issue as it alone cannot make up for an ill-defined terminal condition.

Moreover, we see in \Cref{subfig:kinetic:energy} that for the flow trained on only the Wasserstein-1 proximal, the resulting flow has substantially higher kinetic energy than the the flow that uses both proximals. This implies that the trajectories are not as regular or potentially strongly nonlinear. Moreover, it is likely that the learned flow is strongly dependent on the particular parametrization of the velocity fields.

Our continuous-time flow model stands out for its ability to learn high-dimensional problems directly in the original space, without 
specialized architecture or the use of autoencoders as in normalizng flows. OT-flow \cite{onken2021ot} and the potential-flow generator \cite{yang2019potential} applies autoencoders to get latent representations of the target in lower dimensional spaces and learn normalizing flows in the lower dimensional space. Additionally, our flow can be learned using a discretize-then-optimize (DTO) approach with just a few time steps $K = 5$. We are able to use such a crude step size as the optimal trajectories are known to be provably linear (Theorem \ref{thm:main}). This demonstrates that our problem formulation \eqref{eq:our:learning:objective} enables more efficient training compared to methods with highly engineered architectures such as the 10-layer-stacked CNF setting used for RNODE \cite{finlay2020train}. Our approach is similar to the GAN implementation of the Potential flow generator \cite{yang2019potential}. But our algorithm differs from  it in the sense that we solve the Wasserstein-2 regularized problem with a Wasserstein-1 regularized terminal cost whose combination provides well-defined optimality conditions \eqref{eq:optimality:conditions1}, \eqref{eq:optimality:conditions2} without imposing them as regularization terms in the objective function.

In Figure \ref{fig:mnist2}, we can see that when the two proximals are combined, the resulting generative flows are \emph{discretization-invariant} as the generated images do not change with the step size. This further supporting our claim that a well-posed generative does not have parametrization-dependent performance. This behavior is due to Theorem~\ref{thm:main}, which show that the optimal trajectories are linear, so there is no increased discretization errors with larger step sizes. On the other hand, when only the $\WI$ proximal is applied, we see that some images may change with different step sizes, meaning that the trajectories are incurring extra discretization error with larger step sizes as they are not linear. 

\section{
Discussion and conclusion}
\label{sec:limitation:conclusion}

In this paper, we introduced $\WI\oplus\WII$ proximal generative flows, which combines Wasserstein-1 and Wasserstein-2 proximal regularizations for stabilizing the training of continuous-time deterministic normalizing flows, especially for learning high-dimensional distributions supported on low-dimensional manifolds. The $\WII$ proximal regularization enforce flows that minimize kinetic energy, resulting in straight line paths that are simpler to learn. 
We showed in Theorem~\ref{thm:main} that the optimal trajectories are exactly linear, and we empirically demonstrated this fact in Figure~\ref{fig:mnist2} by showing the trained flow is discretization invariant.

For learning distributions on  manifolds, we choose the terminal cost for the generative flow to be Wasserstein-1 proximal regularization of $f$-divergences, which can compare mutually singular measures. Typically targets are supported on an unknown data manifold, so generated models can easily miss the manifold and be singular with respect to the target distribution.
In addition, Wasserstein-1 proximal regularizations of $f$-divergences inherit from $f$-divergences the 
existence of variational derivatives  for any perturbations of measures,
including singular ones, whereas both $f$-divergences and the Wasserstein-1 distance fail in that respect. This differentiability property suggests that corresponding minimization problems are smooth and the optimal
generative flow can be robustly discovered by gradient optimization. In the numerical example, we saw that without a $\WI$ proximal, the training objective would diverge quickly.

Our mathematical analysis of $\WI\oplus\WII$ flows and their corresponding algorithms, relies on the theory of mean-field game and, more specifically, on the well-posedness of the MFG optimality conditions, consisting of a backwards Hamilton-Jacobi equation coupled with a forward continuity PDE. The  potential function for the optimal generative flow will solve  the HJ equation, while the variational derivative of the Wasserstein-1 proximal \eqref{eq:first:variation} is exactly the terminal condition of the HJ equation, which is well-defined for any distributions. We were able to express the optimality conditions of $\WI\oplus\WII$ generative flows in Theorem~\ref{thm:main}, meaning that the optimality conditions are at least well-defined. We then proved in Theorem~\ref{thm:uniqueness}, a uniqueness result for the MFG of our Wasserstein-1 and Wasserstein-2 proximal flow. This showed that the optimal generative flow is and unique, and therefore the optimization problem is well-posed, implying that it should be robust to different valid discretizations and approximations.

 We derived an adversarial training algorithm for learning  $\WI\oplus\WII$ proximal generative flows using the dual formulation of the Wasserstein-1 proximal. Due to this  adversarial structure, no forward-backward simulation of the flow is required, which is in contrast to likelihood-based training of CNFs.  The resulting generative flow is \emph{more robust} for learning datasets such as the  MNIST dataset without the use of any pre-trained autoencoders or tuning architectures as is typically required in CNFs \cite{chen2018neural,grathwohl2018ffjord,finlay2020train,onken2021ot,yang2019potential}.

 Finally, we also demonstrated the use  of the HJ optimality conditions as a new  \textit{real-time} criterion for optimality of solutions and termination of the algorithm. We found them to be more informative for determining when to stop training the flow rather than imposing them as additional regularizers in the learning objectives as in some related work \cite{onken2021ot,yang2019potential}.

An appealing attribute of $\WI\oplus \WII$ generative flows is the fact they can be trained with only forward simulations of the sample trajectories. For future work, we may explore new approaches for training \emph{stochastic} normalizing flows, which has been a challenge as backward SDEs are difficult to simulate efficiently \cite{tzen2019neural,hodgkinson2020stochastic}. Moreover, we may investigate $\WI\oplus\WII$ generative flows and its connections and applications to distributionally robust optimization (DRO) \cite{Rahimian2019DistributionallyRO}. As $\WI\oplus\WII$ generative flows are another example of an MFG-based generative model, we may rigorously study their robustness to implementation errors using PDE theory, following the style and approach of \cite{mimikos2024score}.

\section{Acknowledgement}
 The authors  are partially funded by AFOSR grant FA9550-21-1-0354. H. G., M.K. and L. R.-B. are partially funded by  NSF DMS-2307115. H. G. and M. K. are partially supported  by  NSF TRIPODS CISE-1934846.

\bibliography{bibliotheque.bib}
\bibliographystyle{unsrt}

\newpage
\appendix

\section{Experimental details}
\label{appendix:sec:experiment:detail}

\subsection{Compute resources}
Our experiment is computed using personal computer in the environment:  \texttt{Apple M2 8 cores} and \texttt{Apple M2 24 GB - Metal 3}.

\subsection{Dataset}
MNIST is a standard machine learning example where the dataset can be retrieved from some machine libraries. The original dataset consists of total 60000 samples of $28\times 28$ gray-scale images whose pixel values range from integers between 0 and 255. We normalize the pixel values and therefore locate samples in $[0,1]^{28 \times 28}$. Each sample represents one of handwritten digits from 0 to 9.  We use randomly chosen 6000 samples (10\% of the entire dataset) as training data.

\subsection{Neural network architecture}
\label{appendix:subsec:nn:architecture}

\begin{table}[ht]

\begin{minipage}{.49\linewidth}
\centering
\begin{tabular}{c}

\hline
CNN Discriminator \\
\hline
  $7 \times 7$ Conv, $1 \times 1$ stride  ($1 \rightarrow 8$) \\
  ReLU\\
  $2 \times 2$ max pool, $2 \times 2$ stride \\
  \hline
  $7 \times 7$ Conv, $1 \times 1$ stride ($8 \rightarrow 8$)  \\
  ReLU\\
  $2 \times 2$ max pool, $2 \times 2$ stride \\
  \hline
  Flatten with dimension $\ell_3$ \\
  \hline
  $W^4 \in \mathbb{R}^{\ell_3 \times 256}$, $b^4 \in \mathbb{R}^{256}$ \\
  ReLU\\
  \hline
  $W^5 \in \mathbb{R}^{256 \times 256}$, $b^5 \in \mathbb{R}^{256}$ \\
  ReLU\\
  \hline
  $W^6 \in \mathbb{R}^{256 \times 256}$, $b^6 \in \mathbb{R}^{256}$ \\
  ReLU\\
  \hline
  $W^7 \in \mathbb{R}^{256 \times 1}$, $b^7 \in \mathbb{R}$ \\
  Linear\\
\hline
\end{tabular}
\medskip
\end{minipage}
\begin{minipage}{.49\linewidth}
\centering
\begin{tabular}{c}

\hline
CNN Potential \\
\hline
  $7 \times 7$ Conv, $1 \times 1$ stride  ($1 \rightarrow 8$) \\
  ReLU\\
  $2 \times 2$ max pool, $2 \times 2$ stride \\
  \hline
  $7 \times 7$ Conv, $1 \times 1$ stride ($8 \rightarrow 8$)  \\
  ReLU\\
  $2 \times 2$ max pool, $2 \times 2$ stride \\
  \hline
  Spatial flatten with dimension $\ell_3$ \\
  \hline
  $W^4 \in \mathbb{R}^{(\ell_3 +1) \times 512}$, $b^4 \in \mathbb{R}^{512}$ \\
  ReLU\\
  \hline
  $W^5 \in \mathbb{R}^{512 \times 512}$, $b^5 \in \mathbb{R}^{512}$ \\
  ReLU\\
  \hline
  $W^6 \in \mathbb{R}^{512 \times 512}$, $b^6 \in \mathbb{R}^{512}$ \\
  ReLU\\
  \hline
  $W^7 \in \mathbb{R}^{512 \times 1}$, $b^7 \in \mathbb{R}$ \\
  Linear\\
\hline
\end{tabular}
\medskip
\end{minipage}

\caption{Neural network architectures of discriminator $\phi: \mathbb{R}^d \rightarrow \mathbb{R}$ and potential $\phi: \mathbb{R}^{d} \times \mathbb{R} \rightarrow \mathbb{R}$. We adapted the neural network architecture from the code for the potential flow generator GAN \cite{yang2019potential} to accommodate high-dimensional image examples. }
\label{table:nn architecture}
\end{table}


\end{document}